\documentclass{article}

\usepackage{microtype}
\usepackage{graphicx}
\usepackage{subcaption}
\usepackage{booktabs} 

\usepackage{hyperref}

\usepackage[accepted]{icml2026}

\usepackage{amsmath}
\usepackage{amssymb}
\usepackage{mathtools}
\usepackage{amsthm}

\usepackage{amsfonts} 
\usepackage{tikz}
\usepackage{graphicx}
\usetikzlibrary{calc}
\usepackage{xcolor}
\usepackage{tcolorbox}
\usepackage{multirow}
\usepackage{multicol}
\usepackage{pgfplots}
\usepackage{amssymb}
\usepackage{bm}
\usepackage{array}
\usepackage{xspace}
\usepackage{pgfplotstable}
\usepackage{pgf}
\usepackage{colortbl}
\usepackage{makecell}
\usepackage{amsmath}
\usepackage{wrapfig}
\usepackage{enumitem}
\usepackage{algorithm}
\usepackage{algorithmic}

\usepackage[utf8]{inputenc}

\usepackage{algorithm}      
\usepackage{listings}       
\usepackage{xcolor}         

\definecolor{codegreen}{rgb}{0,0.5,0}      
\definecolor{codemagenta}{rgb}{0.85,0.08,0.4} 
\definecolor{codeblack}{rgb}{0,0,0}        

\lstdefinestyle{pytorchstyle}{
    language=Python,                 
    backgroundcolor=\color{white},   
    commentstyle=\color{codegreen},  
    keywordstyle=\color{black}, 
    basicstyle=\ttfamily\small,      
    breakatwhitespace=false,         
    breaklines=true,                 
    captionpos=b,                    
    keepspaces=true,                 
    showspaces=false,                
    showstringspaces=false,
    showtabs=false,                  
    tabsize=4,
    frame=none,                      
    numbers=none,                     
    escapeinside={(*@}{@*)}
}

\tcbuselibrary{skins, breakable}

\newtcolorbox{strategiesbox}[2][]{%
  colback=gray!5,       
  colframe=gray!50,     
  coltitle=black,       
  fonttitle=\bfseries,  
  title={#2},           
  sharp corners,        
  boxrule=0.8pt,        
  left=6pt, right=6pt, top=6pt, bottom=6pt, 
  #1
}

\usepackage{amsthm}
\usepackage{titletoc}
\hypersetup{colorlinks=true, linkcolor=mydarkblue, citecolor=mydarkblue,urlcolor=mydarkblue}
\usepackage[toc, page, header]{appendix}

\newcommand{\method}{SpanNorm}

\usetikzlibrary{plotmarks}
\usetikzlibrary{backgrounds}
\usetikzlibrary{fit}
\usetikzlibrary{decorations}
\usetikzlibrary{decorations.markings}
\usetikzlibrary{positioning}
\usetikzlibrary{shapes.misc}

\definecolor{bblue}{HTML}{4F81BD}
\definecolor{rred}{HTML}{C0504D}
\definecolor{ggreen}{HTML}{9BBB59}
\definecolor{ppurple}{HTML}{9F4C7C}
\definecolor{meta}{HTML}{4A7CAE}
\definecolor{meta1}{HTML}{A3A3A3}
\definecolor{meta2}{HTML}{43853E}

\definecolor{myblue}{RGB}{31,120,180}
\definecolor{mygreen}{RGB}{51,160,44}
\definecolor{myred}{RGB}{227,26,28}

\definecolor{iyellow}{RGB}{255,250,205}
\definecolor{ipurple}{RGB}{230,230,250}

\definecolor{upurple}{RGB}{155,89,182}
\definecolor{ublue}{RGB}{52,152,219}
\definecolor{ured}{RGB}{231,76,60}
\definecolor{udark}{RGB}{77,153,77}
\definecolor{ugreen}{RGB}{46,204,113}

\definecolor{bred}{RGB}{230,117,95}
\definecolor{bgreen}{RGB}{141,183,153}
\definecolor{bpurple}{RGB}{204,164,227}
\definecolor{bblue}{RGB}{117,193,196}

\definecolor{citecolor}{HTML}{0071BC}
\definecolor{linkcolor}{HTML}{00008B}

\usepackage[capitalize,noabbrev]{cleveref}

\theoremstyle{plain}
\newtheorem{theorem}{Theorem}[section]

\theoremstyle{definition}

\newtheorem{assumption}[theorem]{Assumption}
\theoremstyle{remark}

\usepackage[textsize=tiny]{todonotes}

\icmltitlerunning{SpanNorm: Reconciling Training Stability and Performance in Deep Transformers}

\begin{document}

\twocolumn[
  \icmltitle{SpanNorm: Reconciling Training Stability and Performance in \\ Deep Transformers}



  \icmlsetsymbol{equal}{*}

  \begin{icmlauthorlist}
    \icmlauthor{Chao Wang}{equal,meituan}
    \icmlauthor{Bei Li}{equal,meituan}
    \icmlauthor{Jiaqi Zhang}{meituan}
    \icmlauthor{Xinyu Liu}{neu}
    \icmlauthor{Yuchun Fan}{neu}
    \icmlauthor{Linkun Lyu}{meituan}
    \icmlauthor{Xin Chen}{meituan}
    \icmlauthor{Jingang Wang}{meituan}
    \icmlauthor{Tong Xiao}{neu}
    \icmlauthor{Peng Pei}{meituan}
    \icmlauthor{Xunliang Cai}{meituan}
  \end{icmlauthorlist}

  \icmlaffiliation{meituan}{Meituan Inc.}
  \icmlaffiliation{neu}{NLP Lab, School of Computer Science and Engineering, Northeastern University, Shenyang, China}

  \icmlcorrespondingauthor{Jiaqi Zhang}{zhangjiaqi39@meituan.com}

  \icmlkeywords{Machine Learning, ICML}

  \vskip 0.3in
]



\startcontents[apx]
\resumecontents[apx]
\printAffiliationsAndNotice{\icmlEqualContribution}

\begin{abstract}
The success of Large Language Models (LLMs) hinges on the stable training of deep Transformer architectures. A critical design choice is the placement of normalization layers, leading to a fundamental trade-off: the ``PreNorm'' architecture ensures training stability at the cost of potential performance degradation in deep models, while the ``PostNorm'' architecture offers strong performance but suffers from severe training instability. In this work, we propose SpanNorm, a novel technique designed to resolve this dilemma by integrating the strengths of both paradigms. 
Structurally, SpanNorm establishes a clean residual connection that spans the entire Transformer block to stabilize signal propagation,
while employing a PostNorm-style computation that normalizes the aggregated output to enhance model performance. We provide a theoretical analysis demonstrating that SpanNorm, combined with a principled scaling strategy, maintains bounded signal variance throughout the network, preventing the gradient issues that plague PostNorm models, and alleviating the representation collapse of PreNorm. Empirically, SpanNorm consistently outperforms standard normalization schemes in both dense and Mixture-of-Experts (MoE) scenarios, paving the way for more powerful and stable Transformer architectures.
\end{abstract}

\section{Introduction}

The Transformer architecture~\citep{vaswani2017attention} has become the cornerstone of modern natural language processing, demonstrating unprecedented success and scalability, particularly in the realm of Large Language Models (LLMs)~\citep{brown2020language,hugo2023llama}. The remarkable capabilities of these models are not merely a function of their attention mechanisms but are also deeply rooted in fundamental design choices that ensure stable and effective training at scale. Among these, the interplay between residual connections~\citep{he2016deep} and normalization layers~\citep{lei2016layer} stands out as a critical factor for success. While both are integral, our work focuses on advancing the understanding and application of normalization within deep Transformer networks.

The evolution of normalization in Transformers provides a compelling narrative of balancing performance and stability. The original Transformer design employed a ``PostNorm'' architecture, where Layer Normalization (LN) is applied after the residual connection. This configuration is often associated with strong performance in shallower models but becomes increasingly difficult to train as network depth increases, suffering from vanishing and exploding gradients that prevent convergence beyond a few dozen layers~\citep{wang-etal-2019-learning,xiong2020on}. 
To overcome this limitation, the ``PreNorm'' architecture was proposed, which moves the normalization layer to the input path of each sub-layer, before the residual addition. This simple yet effective change ensures a clean, unnormalized residual path, which significantly stabilizes gradient flow and enables the training of models with tens of layers~\citep{wang-etal-2019-learning}.

Consequently, the PreNorm Transformer has become the de-facto standard for training deep LLMs~\citep{hugo2023llama,liu2024deepseek,yang2025qwen3}. However, this stability can come at a cost, as PreNorm models sometimes exhibit a ``depth degradation'' problem, where performance does not improve, or even degrades with increased depth~\citep{wang2022deepnet,li-etal-2020-shallow,liu-etal-2020-understanding,sun2025curse}. This issue is frequently attributed to the tendency of PreNorm architectures to optimize an ensemble of shallower subnetworks, an effect facilitated by their direct residual paths. While effective to a point, this ensemble approach eventually hits a performance ceiling as network depth increases. Conversely, PostNorm models are believed to learn more potent, deeply integrated features but are notoriously hampered by training instability. This phenomenon has spurred a new wave of research. On one hand, efforts have been made to stabilize PostNorm training to harness its performance benefits in deeper networks~\citep{normformer21shleifer,liu-etal-2020-understanding,wang2022deepnet,hybridnorm25zhuo}. On the other hand, researchers are exploring ways to mitigate the performance limitations of PreNorm architectures~\citep{kim2025peri,sun2025curse}.

In this work, we introduce SpanNorm, a novel normalization technique that directly addresses this trade-off. 
As the name implies, this architecture establishes a direct structural \textit{span} from the block input to the final normalization, bypassing the intermediate layers. 
Concretely, SpanNorm adopts the core design principle of PreNorm by establishing a clean path between layers, while adopting the PostNorm-style computation that normalizes the sum of the layer's input and the residual branch output. A comparative analysis of gradient dynamics reveals the advantages of SpanNorm over both PreNorm and PostNorm.

To further ensure robust training for large-scale models, we also outline principles for an effective scaling strategy, encompassing both depth and width scaling. Our analysis demonstrates that SpanNorm, when paired with appropriate initialization and learning rate schedules, effectively preserves signal norms as information propagates through the network. We specifically show that the gradient signals avoid the vanishing gradient problem. Our experimental results validate this stability across a wide range of configurations, confirming that SpanNorm provides a solid foundation for training Transformers with varying depths and parameter scales, covering both standard dense architectures and MoE models.

Our main contributions are summarized as follows:
\begin{itemize}[leftmargin=*]
    \item We propose SpanNorm, a novel normalization strategy that harmonizes the training stability of PreNorm with the superior performance of PostNorm. By establishing a clean residual path while normalizing the residual sum, SpanNorm effectively resolves the dilemma between optimization stability and representational capacity.
    
    \item We provide a theoretical analysis of gradient dynamics, proving that SpanNorm avoids both the vanishing gradients of PostNorm and the representation collapse of PreNorm. Furthermore, we derive a theoretically justified initialization strategy (Scale Init) that guarantees bounded signal variance, enabling stable training even as depth increases.
    
    \item We demonstrate through extensive experiments on both dense and Mixture-of-Experts (MoE) models that SpanNorm consistently outperforms state-of-the-art normalization counterparts. Notably, we successfully scale SpanNorm to an ultra-deep 128-layer model (6.5B), demonstrating robust training stability and superior performance. Crucially, SpanNorm incurs no additional computational overhead, serving as a strictly better, plug-and-play replacement for existing norms in large-scale LLMs.
\end{itemize}

\paragraph{Conflict of Interest Disclosure}
The authors declare that they have no competing interests.
\section{Related Work}

The placement of Layer Normalization (LN) is critical to Transformer stability. The original PostNorm design applies normalization after the residual connection, performing effectively in shallower models, but suffering from unstable gradients in deep networks. A simple yet impactful modification, PreNorm, reverses this order by applying normalization to the input of the sub-layer, maintaining a ``clean'' residual path that mitigates vanishing gradients~\citep{wang-etal-2019-learning} and removes the need for aggressive warm-up schedules~\citep{xiong2020on, liu-etal-2020-understanding}. This stability advantage made it the standard in Vision Transformer (ViT)~\citep{ViT21Dosovitskiy}, GPT-3~\citep{brown2020language}, and most modern large language models~\citep{hugo2023llama,yang2025qwen3,team2025longcat}.

However, deep PreNorm models exhibit their own limitations. The clean residual path can lead to representation collapse, where upper layers fail to learn new features~\citep{li-etal-2020-shallow}. This ``Curse of Depth'' is characterized by exponential growth in activation variance and layers that devolve into identity functions, limiting the benefits of scaling depth~\citep{sun2025curse}.

These challenges spurred two research streams. The first focuses on improving PreNorm. Approaches like Normformer~\citep{normformer21shleifer} introduce auxiliary normalization layers to balance gradient norms, while LayerNorm Scaling~\citep{sun2025curse} explicitly regulates activation variance by rescaling LN outputs. More recently, ``sandwich'' designs that wrap blocks with normalization both before and after the attention mechanism have also demonstrated potential~\citep{team2024gemma, kim2025peri}.
The second stream aims to stabilize PostNorm to harness its performance. \citet{zhang-etal-2019-improving} showed that a depth-scaled initialization strategy could alleviate the gradient vanishing problem, and \citet{pmlr-v119-huang20f} later provided theoretical support, demonstrating that with careful initialization, deep PostNorm models can be optimized effectively. Methods like DeepNorm~\citep{wang2022deepnet} have successfully trained thousand-layer Transformers using specific rescaling and initialization strategies, highlighting PostNorm's sensitivity to these choices.

\input{Figure/mixnorm}

Despite the aforementioned studies, several hybrid approaches have emerged to combine the benefits of both paradigms. Mix-LN~\citep{mixln25Li} adopts an inter-layer hybrid strategy, using PostNorm for shallower layers and transitioning to PreNorm for deeper layers. In contrast, HybridNorm~\citep{hybridnorm25zhuo} utilizes an intra-layer approach, employing a QKV-Norm (where normalization is applied on the head dimension) before the attention computation while retaining a PostNorm after the attention output's residual connection, thereby attempting to gain stability without sacrificing performance.

\section{SpanNorm: Design and Analysis}

In this section, we introduce SpanNorm, a novel modification to the Transformer block architecture designed to enhance training stability and performance. We begin by reviewing the standard PreNorm and PostNorm structures, and then present the SpanNorm architecture and provide a theoretical analysis of its advantages, focusing on its benefit against the PreNorm and PostNorm models. The overall architecture of SpanNorm and other variants is shown in Figure~\ref{fig:spannorm}.

\subsection{Preliminaries: Normalization in Transformers}

The placement of layer normalization within a Transformer block is a critical design choice that significantly impacts model training dynamics and final performance. The two dominant strategies are PostNorm and PreNorm.

\paragraph{Post-LayerNorm (PostNorm):} The original Transformer architecture introduced the PostNorm variant. In this setup, LayerNorm is applied after the residual connection in each sub-layer. The computation flow for a single block is:
\begin{align}
    \mathbf{Y}_{l} &= \mathrm{LN}(\text{MHA}(\mathbf{X}^{'}_{l}) + \mathbf{X}^{'}_{l}) \\
    \mathbf{X}^{'}_{l+1} &= \text{LN}(\text{FFN}(\mathbf{Y}_{l}) + \mathbf{Y}_{l})
\end{align}
where $\mathbf{Y}_{l}$ denotes the residual output of the first sub-layer (Multi-Head Attention, abbreviated as MHA) and the layer input $\mathbf{X}^{'}_{l}$. $\mathbf{X}^{'}_{l+1}$ denotes the layer output. 
While PostNorm often yields strong model performance and regularization, it is difficult to train in very deep networks. The gradients can vanish or explode, necessitating careful learning rate warm-up and initialization strategies.

\paragraph{Pre-LayerNorm (PreNorm):} To address the training instability of PostNorm, especially when training deep models, the PreNorm variant was proposed. Here, LayerNorm is applied to the input of each sub-layer, before the residual connection. This creates a clean, identity-based residual path. The computation flow is:
\begin{align}
    \mathbf{Y}_{l} &= \text{MHA}(\text{LN}(\mathbf{X}^{'}_{l})) + \mathbf{X}^{'}_{l} \\
    \mathbf{X}^{'}_{l+1} &= \text{FFN}(\text{LN}(\mathbf{Y}_{l})) + \mathbf{Y}_{l}
\end{align}

\begin{figure*}[!t]
  \centering
  \includegraphics[width=1.0\textwidth]{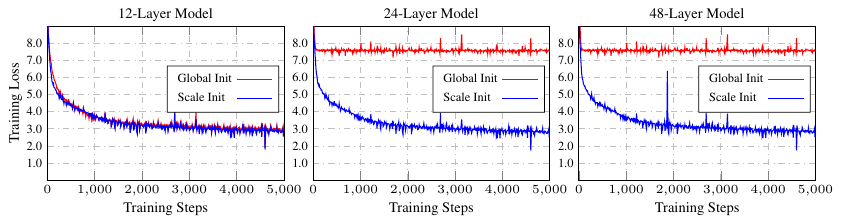} 
  \vspace{-1.5em}
  \caption{Early-stage training stability analysis. We train dense models with a fixed hidden dimension $d=1536$ across increasing depths (12, 24, 48 layers) for 5000 steps to evaluate the impact of initialization on stability. 
}
\label{fig:scale_init}
\end{figure*}

\subsection{The SpanNorm Architecture Design}
The goal of SpanNorm is to combine the performance benefits of PostNorm with a structural design that promotes the training stability characteristic of PreNorm. We achieve this by modifying the information flow in the second residual connection of a PostNorm block.

Given the input to the block $X'_l$, the computation proceeds as follows:
\begin{align}
    \mathbf{Y}_{l} &= \text{LN}(\text{MHA}(\mathbf{X}^{'}_{l}) + \mathbf{X}^{'}_{l}) \\
    \mathbf{X}^{'}_{l+1} &= \text{LN}(\text{FFN}(\mathbf{Y}_{l}) + {\color{myred}\mathbf{X}^{'}_{l}}) \label{eq:SpanNorm_ffn}
\end{align}
and the pseudocode is shown in Algorithm~\ref{alg:pytorch_transformer}. The first sub-layer (i.e., MHA) follows the standard PostNorm structure. However, in the second sub-layer (Feed-Forward Network, abbreviated as FFN), the residual connection adds the original block input, $\mathbf{X}^{'}_{l}$, instead of the intermediate representation, $\mathbf{Y}_{l}$. We term this SpanNorm because the residual connection effectively spans the entire transformation block. This minor change creates a powerful direct skip connection from the input to the output of the entire block.

We make a specific modification for the first layer of the network ($l=1$). The input to this layer consists of raw word embeddings, $\mathbf{E}$, which are not normalized, unlike the inputs to subsequent layers. To ensure the MHA sub-layer operates on a normalized input and maintains a consistent attention pattern across the network, we apply an additional LayerNorm to the embeddings before the MHA computation. However, this initial LayerNorm is not applied to $\mathbf{E}$ within the identity path of the residual connection, thereby preserving a clean, unmodified ``highway'' for information to flow seamlessly. Thus, the forward pass for the first layer is defined as:
\begin{align}
    \mathbf{Y}_{1} &= \text{LN}\left(\text{MHA}(\text{LN}(\mathbf{E})) + \mathbf{E}\right) \\
    \mathbf{X}^{'}_{2} &= \text{LN}\left(\text{FFN}(\mathbf{Y}_{1}) + \mathbf{E}\right)
\end{align}

\subsection{Comparative Analysis of Gradient Dynamics}
\label{sec:theoretical_analysis}

The design of SpanNorm strategically combines the structural strengths of PreNorm and PostNorm. In this section, we provide a theoretical analysis of gradient dynamics, demonstrating how SpanNorm mitigates the specific pathologies of its predecessors: (1) the representation collapse characteristic of deep PreNorm networks, and (2) the vanishing gradients inherent to PostNorm architectures.

\paragraph{Advantage over PreNorm: Preventing Deep-Layer Representation Collapse}
A primary pathology in deep PreNorm Transformers is the uncontrolled growth of feature variance, which leads to representation collapse. In a standard PreNorm block, the main residual path acts as an identity map, causing variance to accumulate linearly with depth. Formally, for a network of depth $L$, the variance of the hidden states scales as $\text{Var}(X'_L) = \Theta(L)$~\citep{kedia2024transformers}.

This variance explosion degrades the learning capability of deep layers. Consider the Jacobian of the $l$-th PreNorm block. Let $\text{Res}(\cdot)$ denote the transformation within the residual branch (e.g., Self-Attention or FFN). Since the transformation within the residual branch operates on inputs normalized by the feature standard deviation $\sigma_l = \sqrt{\text{Var}(X'_l)}$, the Jacobian $J_{\text{Pre}}$ can be expressed as:
\begin{equation}
    J_{\text{Pre}} = \mathbb{I} + \underbrace{\frac{\partial \text{Res}(X'_l)}{\partial X'_l} \cdot \mathcal{O}\left(\frac{1}{\sigma_l}\right)}_{\text{Vanishing Term}}
\end{equation}
As $l \to \infty$, $\sigma_l \to \infty$, causing the residual term to vanish at a rate of $\mathcal{O}(1/\sqrt{l})$. Consequently, $J_{\text{Pre}} \to \mathbb{I}$, meaning the gradients effectively bypass the transformative sub-layers. The deep layers thus devolve into identity functions, contributing minimal new information to the representation.

\textit{The SpanNorm Solution.} SpanNorm circumvents this issue by enforcing a variance reset at the end of each block (Eq.~\ref{eq:SpanNorm_ffn}). By applying LayerNorm to the combined output of the FFN and the skip connection, SpanNorm ensures that the output variance remains bounded, i.e., $\text{Var}(X'_{l+1}) = \Theta(1)$. This prevents the Jacobian collapse, forcing deep layers to contribute meaningfully to the feature transformation.

\paragraph{Advantage over PostNorm: Enhancing Stability by Mitigating Gradient Decay.}
While standard PostNorm models maintain bounded variance, they suffer from exponential gradient decay due to the serial application of normalization layers on the main propagation path. To formalize the advantage of SpanNorm, we analyze the signal attenuation factor under a simplified homogeneous setting.

\begin{assumption}[Homogeneous Variance]
 We assume that the pre-normalized sums in each sub-layer exhibit a consistent standard deviation $\sigma > 1$ across the network.
 \label{assump}
\end{assumption}

Under Assumption~\ref{assump}, the gradient magnitude $\|G\|$ passing through a normalization layer is scaled by a factor of $1/\sigma$. We compare the cumulative scaling behavior at depth $L$:

\textit{PostNorm Decay.} In a PostNorm block, the signal passes through two normalization layers sequentially (one after Attention, one after FFN). The gradient norm at layer $L$ is proportional to the product of these scaling factors:
\begin{equation}
    \|G_{\text{Post}}(L)\| \propto \left( \frac{1}{\sigma} \cdot \frac{1}{\sigma} \right)^L = \sigma^{-2L}
\end{equation}

\textit{SpanNorm Preservation.} In contrast, SpanNorm's topology (Eq.~\ref{eq:SpanNorm_ffn}) introduces a single normalization on the aggregate block output $X'_{l+1}$. The gradient signal is effectively scaled only once per block regarding the macro-architecture: $\|G_{\text{Span}}(L)\| \propto \left( \frac{1}{\sigma} \right)^L = \sigma^{-L}$.

Comparing the decay rates reveals a fundamental advantage:
\begin{equation}
    ||G_{\text{Span}}(L)|| \propto \sqrt{||G_{\text{Post}}(L)||}
    \label{eq:grad_relation}
\end{equation}
In the regime of deep networks where gradients vanish ($||G|| \ll 1$), this square-root relationship represents a qualitative shift in stability. 
It converts the quadratic decay mechanism of PostNorm ($\sigma^{-2}$ per layer) into a much gentler linear decay ($\sigma^{-1}$), preventing the vanishing gradient problem at its root. 
For deep networks (e.g., $L=128$) with $\sigma \approx 1.05$, this theoretical difference results in gradient signals for SpanNorm that are orders of magnitude larger than those of PostNorm ($1.05^{128} \approx 500 \times$), effectively bridging the gap between divergence and convergence.

\section{Principled Initialization for Depth Scalability}
\label{sec:depth_scaling}
\begin{tcolorbox}[colback=gray!10, colframe=gray!50, arc=2pt, boxrule=0.5pt, left=4pt, right=4pt, top=4pt, bottom=4pt]
\centering
\textbf{Recipe:} Scale the initialization variance of the FFN and Attention output matrices by $1/L$.
\end{tcolorbox}

\begin{figure}[t!] 
  \centering
  \includegraphics[width=\linewidth]{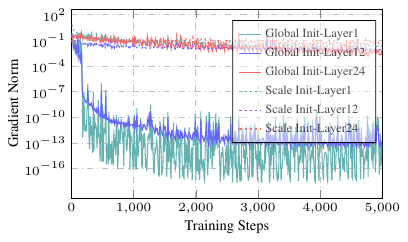} 
  \caption{Gradient norm dynamics (Analysis of the 24-layer model in Figure~\ref{fig:scale_init}). We visualize layer-wise gradient norms for the exact 24-layer configuration that exhibited instability. (See Figure~\ref{fig:grad_seed} for the full visualization of all 24 layers).}
  \label{fig:gradient_norm}
  
\end{figure}
To ensure stable training as network depth increases, we analyze the conditions for stable initialization. The stability is governed by the spectral norm of the layer-wise Jacobian matrix, which must remain close to 1 to prevent exploding or vanishing gradients. Our analysis of the proposed SpanNorm architecture is summarized in the following theorem.

\begin{theorem}[Condition for Stable Depth Scaling]
For a SpanNorm Transformer model with $L$ layers to maintain stable training dynamics as $L \to \infty$, the output variance of its residual sub-layers (e.g., the FFN block $\mathcal{F}$) must scale inversely with the total depth. Specifically, the condition is: 
\begin{equation}
    \text{Var}(\mathcal{F}(\mathrm{LN}(\mathbf{x}))) = \mathcal{O}(1/L)
\end{equation}
\label{thm:depth_scale} 
\end{theorem}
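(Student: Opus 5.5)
We will prove the condition by computing the end-to-end Jacobian of the SpanNorm stack and asking when its spectral norm stays $\Theta(1)$ as $L\to\infty$. Write one SpanNorm block as $\mathbf{x}_{l+1}=\mathrm{LN}(\mathbf{x}_l+\mathcal{R}_l(\mathbf{x}_l))$, where $\mathcal{R}_l$ collects the attention-plus-FFN branch (the FFN acting on $\mathbf{Y}_l=\mathrm{LN}(\text{MHA}(\mathbf{x}_l)+\mathbf{x}_l)$). By the variance reset discussed in Section~\ref{sec:theoretical_analysis}, every block input $\mathbf{x}_l$ is normalized, so $\mathrm{Var}(\mathbf{x}_l)=1$. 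Set $v:=\mathrm{Var}(\mathcal{F}(\mathrm{LN}(\mathbf{x})))$. At initialization we treat the branch output as roughly uncorrelated with $\mathbf{x}_l$, as in standard mean-field arguments. The pre-normalization sum then has variance $\sigma^2=1+v$.

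First, the block Jacobian. The LayerNorm Jacobian at a point of standard deviation $\sigma$ is $\frac{1}{\sigma}P$, where $P$ projects orthogonally to the mean direction and to the normalized vector itself. Hence
\begin{equation}
J_l=\frac{1}{\sigma}P_l\left(\mathbb{I}+\frac{\partial \mathcal{R}_l}{\partial \mathbf{x}_l}\right).
\end{equation}
We estimate $\|J_l\|$ using second moments, i.e.\ $\mathbb{E}\|J_l u\|^2$ for a random unit $u$. The branch Jacobian has squared gain of order $v$, since the output projection sets both the branch's output scale and its gain. Its cross term with the identity vanishes in expectation, because the weights are zero-mean. Under Assumption~\ref{assump}, this gives $\mathbb{E}\|J_l u\|^2\approx (1+c\,v)/(1+v)$ with $c=\Theta(1)$. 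When $c=1$ the ratio is $1-O(v^2)$, and otherwise it is $1+(c-1)v+O(v^2)$. In either case the deviation of the per-layer gain from $1$ is $\delta=\Theta(v)$ in general.

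Second, compose the blocks. The end-to-end gain is $\prod_{l=1}^L(1+\delta_l)\approx\exp\!\big(\sum_l\delta_l\big)=\exp(\Theta(Lv))$. This mirrors the $\sigma^{-L}$ computation in Eq.~\eqref{eq:grad_relation}. Beyond the first-order drift, the projection $P_l$ discards a fraction of gradient energy of order $v$ per layer. In both cases the total distortion is governed by $Lv$.

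Third, the two directions. If $v=\mathcal{O}(1/L)$, then $Lv=\mathcal{O}(1)$ and the product stays bounded away from $0$ and $\infty$ uniformly in $L$. This is sufficiency, and it is achieved by the $1/L$ Scale Init of the output matrices. Conversely, if $Lv\to\infty$, then $\sigma^{-L}=(1+v)^{-L/2}\to 0$, which reproduces the gradient decay of the PostNorm analysis. So $v=\mathcal{O}(1/L)$ is necessary for stable dynamics.

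The main obstacle is making the per-layer gain estimate rigorous. The branch Jacobian is correlated with $\mathbf{x}_l$, attention softmax is nonlinear, and the projections $P_l$ act in different directions across layers. I will sidestep this by working with expected squared norms under random initialization and independent weights per layer, where cross terms vanish. I will also use the $\frac{1}{\sigma}$ LayerNorm scaling already adopted in Assumption~\ref{assump}, rather than attempting worst-case spectral bounds.
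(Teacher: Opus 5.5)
Your skeleton matches the paper's: LayerNorm contributes a factor $1/\sigma_{\mathbf Z_l}$, uncorrelatedness gives $\sigma_{\mathbf Z_l}^2=1+v$, and the product over $L$ blocks must stay $\Theta(1)$. However, the necessity direction has a genuine gap, and your own refinement creates it.

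You estimate the per-layer mean-square gain as $(1+cv)/(1+v)$. You then derive necessity from $\sigma^{-L}=(1+v)^{-L/2}\to 0$, which discards the numerator. When $c=1$ the two cancel, and $c=1$ is the natural case in your second-moment model. A linear FFN (the identity-activation approximation the paper's own proof invokes), or a ReLU FFN at initialization, has mean-square gain per unit input direction exactly equal to its output variance $v$. So $c$ departs from $1$ only through $\partial\mathbf Y_l/\partial\mathbf x_l$, and only by $\mathcal O(1/L)$ if $\mathbf W_O$ is scaled as in the paper's recipe.

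In that regime the per-layer deviation is $o(v)$ rather than $\Theta(v)$, so your sentence ``in either case $\delta=\Theta(v)$'' is false. With, e.g., $v=L^{-1/2}$, the accumulated log-gain is $\mathcal O(v+Lv^2)=\mathcal O(1)$. The product therefore stays bounded even though $v$ is not $\mathcal O(1/L)$.

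The appeal to $P_l$ does not rescue this. $P_l$ removes only the components along the all-ones vector and along $\mathbf x_{l+1}$. For a gradient in generic position that is an $\mathcal O(1/d)$ fraction of its energy, and nothing in your argument forces a $\Theta(v)$ fraction of the backpropagated gradient to lie along $\mathbf x_{l+1}$. For $c\neq 1$ the gain is $\approx e^{(c-1)Lv/2}$, so necessity does survive there, but via explosion when $c>1$, not via the $\sigma^{-L}$ decay you cite.

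The paper's proof avoids this only because it never evaluates the branch term. It bounds the block Jacobian by $\frac{1}{\sigma_{\mathbf Z_l}}\bigl(1+\|\mathbf W_2\mathbf W_1\|_2(1+\|\mathbf W_O\mathbf J^{\mathcal A'}\|_2)/\sigma_{\mathbf A_l}\bigr)$ and then declares the outer factor decisive. It requires $\prod_l\sigma_{\mathbf Z_l}^{-1}=\Theta(1)$, i.e.\ $\sigma_{\mathbf Z_l}=1+\mathcal O(1/L)$, and converts this via $\mathrm{Var}(\mathbf Z_l)=1+\mathrm{Var}(\mathcal F(\mathbf Y_l))$. You can reproduce that argument only by adopting the same stance as an explicit modelling assumption, which your numerator computation contradicts.

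A genuine repair within your framework is to return to the criterion you announced: the spectral norm or conditioning of the end-to-end Jacobian, not the typical-direction gain. Even when the $1/\sigma$ factors cancel the mean-square growth, the extreme singular values of $\prod_l(\mathbb I+\partial\mathcal R_l/\partial\mathbf x_l)$ drift apart as $Lv$ grows. Heuristically the product behaves like a multiplicative random walk run for total time of order $Lv$, so boundedness would force $Lv=\mathcal O(1)$. That computation is what is missing.

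Do keep your second-moment argument for sufficiency, where it is stronger than the paper's bound. The paper's inner factor is $1+\Theta(\sqrt v)$ per layer, so its bound on the product is $e^{\Theta(L\sqrt v)}$, which is not $\mathcal O(1)$ at $v=1/L$. Your $1+\mathcal O(v)$ gives $e^{\mathcal O(Lv)}=\mathcal O(1)$ for typical directions.
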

\paragraph{Practical Initialization Strategy.} 
Theorem~\ref{thm:depth_scale} offers a clear initialization strategy. To directly meet the variance condition $\text{Var}(\mathcal{F}(\cdot)) = \mathcal{O}(1/L)$, initializing the FFN block parameters is crucial. We recommend setting the standard deviation of the output matrix $\mathbf{W}_2$'s initialization to $\mathcal{O}(1/\sqrt{L})$, which effectively reduces the matrix norm and ensures the FFN's output variance aligns with $\mathcal{O}(1/L)$. Similarly, the attention output matrix $\mathbf{W}_O$ should be scaled to preserve gradient integrity within the FFN block and support the near-identity principle of deep residual networks. Thus, we apply $\mathcal{O}(1/\sqrt{L})$ scaling to both $\mathbf{W}_2$ and $\mathbf{W}_O$. Notably, this theoretical approach matches the initialization method used in frameworks like Megatron-LM~\citep{shoeybi2019megatron}. While we apply this initialization globally to ensure fairness, our experiments in Section~\ref{sec:experiment} demonstrate that PostNorm still fails to converge at scale, validating our analysis in Section~\ref{sec:theoretical_analysis} that the superior stability of SpanNorm stems from its architectural topology rather than specific initialization strategies.

\definecolor{grey}{rgb}{0.5,0.5,0.5}  
\begin{table*}[t!]
\centering
\small
\setlength{\tabcolsep}{2.5pt}
\caption{Our SpanNorm results against PreNorm~\citep{hugo2023llama} on various configurations. All models are trained on the same subset of the SlimPajama dataset (from 30B to 200B) with the Mistral tokenizer~\citep{jiang2023mistral7B}. The last column shows the average over all benchmarks that use (normalized) accuracy as the metric.}

\begin{tabular}{lrr|cc|cccccc|c}
\toprule
\textbf{Model} & \textbf{Param} & \textbf{Tokens}   &  \textbf{Wiki.} & \textbf{LMB.} &   \textbf{LMB.} & \textbf{PIQA} &   \textbf{Hella.} & \textbf{SciQ} &  \textbf{ARC-c}  &  \textbf{Wino.} &  \textbf{Avg.}  \\
&  & & ppl $\downarrow$  & ppl $\downarrow$ & acc $\uparrow$  & acc $\uparrow$ &   acc\_norm $\uparrow$  & acc $\uparrow$  & acc\_norm $\uparrow$ & acc $\uparrow$  & acc $\uparrow$ \\

\midrule
\rowcolor{grey!20}
\multicolumn{12}{c}{\textit{Dense Models}} \\
PreNorm &740M & 30B & 22.5 & 22.9 & 39.5 & 66.3 & 39.3 & 79.1 & 25.3 & 49.8 & 49.8 \\
SpanNorm & 740M & 30B & 20.6 & 26.3 & 38.5 & 67.9 & 42.4 & 80.1 & 25.1 & 50.8 & 50.8 \\
\midrule
PreNorm & 1.3B & 100B & 17.4 & 12.6 & 51.1 & 70.0 & 49.6 & 84.4 & 26.8 & 53.1 & 55.8 \\
SpanNorm & 1.3B & 100B & 15.7 & 12.0 & 52.1 & 72.3 & 54.0 & 85.5 & 28.8 & 55.3 & 58.0 \\
\midrule
PreNorm & 5B & 200B & 12.5 & 6.9 & 61.4 & 73.7 & 64.4 & 90.7 & 34.4 & 60.1 & 64.1 \\
SpanNorm & 5B & 200B & \bf 11.8 & \bf 5.8 & \bf 64.2 & \bf 75.7 & \bf 66.9 & \bf 92.0 & \bf 36.3 & \bf 64.2 & \bf 66.6 \\
\midrule
\rowcolor{grey!20}
\multicolumn{12}{c}{\textit{MoE Models}} \\
PreNorm & A2.4B-16B & 200B & 12.0 & 6.7 & 61.7 & 75.9 & 66.0 & 89.1 & 37.0 & 60.9 & 65.1 \\
SpanNorm & A2.4B-16B & 200B & \bf 11.4 & \bf 6.5 & \bf 63.0 & \bf 76.2 & \bf 68.1 & \bf 90.8 & \bf 38.6 & \bf 61.3 & \bf 66.3 \\
\bottomrule
\end{tabular}
\label{tab:lm-evaluation-harness}
\end{table*}

\begin{table*}[t!]
\centering
\small
\setlength{\tabcolsep}{2.5pt}
\caption{Comparisons with other advanced LN variants.}
\begin{tabular}{lrr|cc|cccccc|c}
\toprule
\textbf{Model} & \textbf{Param} & \textbf{Tokens}   &  \textbf{Wiki.} & \textbf{LMB.} &   \textbf{LMB.} & \textbf{PIQA} &   \textbf{Hella.} & \textbf{SciQ} &  \textbf{ARC-c}  &  \textbf{Wino.} &  \textbf{Avg.}  \\
&  & & ppl $\downarrow$  & ppl $\downarrow$ & acc $\uparrow$  & acc $\uparrow$ &   acc\_norm $\uparrow$  & acc $\uparrow$  & acc\_norm $\uparrow$ & acc $\uparrow$  & acc $\uparrow$  \\

\midrule
PostNorm        & 5B & 200B & \multicolumn{9}{c}{failed} \\
PreNorm         & 5B & 200B & 12.5 & 6.9 & 61.4 & 73.7 & 64.4 & 90.7 & 34.4 & 60.1 & 64.1 \\
HybridNorm     & 5B & 200B & 12.2 & 6.5 & 62.7 & 75.1 & 66.2 & 91.4 & 35.4 & 61.3 & 65.4 \\
Mix-LN          & 5B & 200B & 12.3 & 6.9 & 61.3 & 75.6 & 65.1 & 90.1 & 34.1 & 61.4 & 64.6 \\
LayerNorm Scaling & 5B & 200B & 13.6 & 14.9 & 51.0 & 73.1 & 60.7 & 86.1 & 32.2 & 58.3 & 60.2 \\
Peri-LN        & 5B & 200B & 12.3 & 7.2 & 59.5 & 75.7 & 65.1 & 90.3 & 35.2 & 59.9 & 64.3 \\
SpanNorm        & 5B & 200B & \bf 11.8 & \bf 5.8 & \bf 64.2 & \bf 75.7 & \bf 66.9 & \bf 92.0 & \bf 36.3 & \bf 64.2 & \bf 66.6 \\
\bottomrule
\end{tabular}
\label{tab:against_other_variants}
\end{table*}

\paragraph{Experimental Validation.}
To empirically validate the effectiveness of the scaled initialization, we conducted experiments on models with 12, 24, and 48 layers, as shown in Figure~\ref{fig:scale_init}. We observed that a standard global initialization leads to training collapse when scaled to 24 layers, while the scaled initialization allows stable training even at 48 layers. 
A closer inspection of the gradient dynamics for the failing 24-layer configuration (Figure~\ref{fig:gradient_norm}) confirms that the instability stems from vanishing gradients in the initial layers. By mitigating this issue, our scaled initialization ensures that valid gradient signals propagate to the bottom of the network, which is essential for training deep Transformers.

\section{Experiments}
\label{sec:experiment}
\paragraph{Experimental Setups}
We summarize the detailed setups of selected baselines, model configurations, training hyperparameters and the corresponding dataset and evaluation protocols in Appendix~\ref{appendix:exp_setups}.

\paragraph{Results on LM Evaluation Harness} 
We conducted a series of experiments across various model sizes, utilizing both dense and MoE architectures. The comprehensive results, presented in Table~\ref{tab:lm-evaluation-harness}, demonstrate a clear and consistent performance advantage for our approach when compared to the PreNorm baseline. 
Specifically, our method yields an average improvement of $1.0$-$2.5$ points across different dense model capacities (from 740M to 5B). Similarly, our SpanNorm can also achieve an average improvement of $1.2$ when switching to the MoE setting, further demonstrating its effectiveness. Notably, our design also addresses a common optimization challenge by enabling stable training of PostNorm models (which failed to converge), effectively eliminating the gradient vanishing.

\begin{table*}[t!]
\centering
\small
\setlength{\tabcolsep}{2.5pt}
\caption{Deep scaling evaluation: Comparisons with advanced LN variants on a 128-layer model.}
\begin{tabular}{lrr|cc|cccccc|c}
\toprule
\textbf{Model} & \textbf{Param} & \textbf{Tokens}   &  \textbf{Wiki.} & \textbf{LMB.} &   \textbf{LMB.} & \textbf{PIQA} &   \textbf{Hella.} & \textbf{SciQ} &  \textbf{ARC-c}  &  \textbf{Wino.} &  \textbf{Avg.}  \\
&  & & ppl $\downarrow$  & ppl $\downarrow$ & acc $\uparrow$  & acc $\uparrow$ &   acc\_norm $\uparrow$  & acc $\uparrow$  & acc\_norm $\uparrow$ & acc $\uparrow$  & acc $\uparrow$ \\

\midrule
PostNorm & 6.5B & 400B & \multicolumn{9}{c}{failed} \\
PreNorm & 6.5B & 400B & 11.9 & 6.6 & 62.6 & 75.3 & 65.2 & 90.5 & 32.6 & 60.1 & 64.4 \\
HybridNorm & 6.5B & 400B & \multicolumn{9}{c}{failed} \\
Mix-LN & 6.5B & 400B & 11.4 & 6.2 & 64.0 & 76.1 & 67.5 & 90.4 & 35.7 & 62.5  & 66.0 \\
SpanNorm & 6.5B & 400B & \bf 10.4 & \bf 5.5 & \bf 65.4 & \bf 77.5 & \bf 70.6 & \bf 92.6 & \bf 37.7 & \bf 66.6 & \bf 68.4 \\
\bottomrule
\end{tabular}
\label{tab:128layer_results}
\end{table*}

\begin{figure*}[!t]
\centering
  \includegraphics[width=\linewidth]{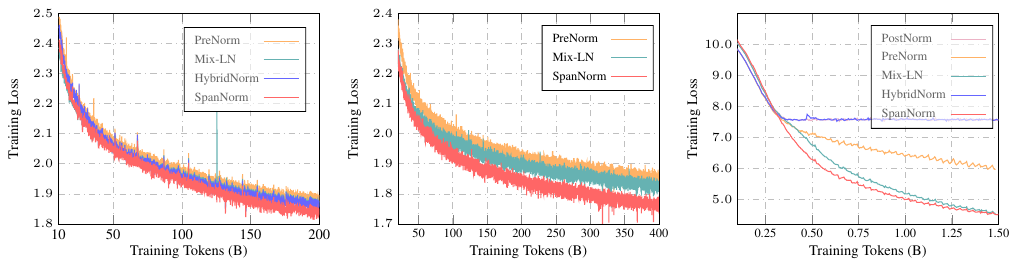}
  
\caption{Performance and scalability across scales. 
(Left) SpanNorm consistently outperforms baselines on a 64-layer 5B dense model (200B tokens). 
(Middle \& Right) On an ultra-deep 128-layer 6.5B model (400B tokens), SpanNorm achieves superior performance with robust convergence, whereas PostNorm and HybridNorm suffer from catastrophic divergence in the early phase (Right).}
    
  \label{fig:128_layer_training_loss}
\end{figure*}

\paragraph{Comparisons with Other LayerNorm Variants}
To comprehensively understand the advantages of SpanNorm, we also compared it with several recent variants, as shown in Table~\ref{tab:against_other_variants}. To ensure a fair comparison, we re-implemented these methods, including Mix-LN~\citep{mixln25Li}, HybridNorm~\citep{hybridnorm25zhuo}, LayerNorm Scaling~\citep{sun2025curse} and Peri-LN~\citep{kim2025peri} in our codebase using Megatron and conducted experiments on a 5B dense model trained on 200B tokens for robust results. We summarize the results of the LM Harness evaluation in Table~\ref{tab:against_other_variants} and plot the training curves in Figure~\ref{fig:128_layer_training_loss} (left). Unexpectedly, LayerNorm Scaling yielded worse results than the PreNorm baseline with the same hyperparameters. For LayerNorm Scaling, we attribute this phenomenon to its scaling factor, which is strongly dependent on the total depth of the model. In models with a large depth (e.g., 64), the gradients of shallower layers may be scaled down too much, which negatively impacts the optimization process.
Meanwhile, Peri-LN, HybridNorm and Mix-LN (PostNorm ratio $\alpha=25\%$) deliver better performance than PreNorm, and our SpanNorm consistently outperforms these advanced variants in both training loss and downstream performance.

\paragraph{Scalability on Ultra-Deep Networks.} We stress-test stability by scaling to 128 layers (6.5B parameters). Table~\ref{tab:128layer_results} and Figure~\ref{fig:128_layer_training_loss} focuses exclusively on the most competitive variants from Table~\ref{tab:against_other_variants}, excluding those that already underperformed the PreNorm. Our results reveal that success at moderate depths does not guarantee scalability. Notably, despite its effectiveness in shallower settings, HybridNorm suffered from optimization divergence in the deep regime. In contrast, SpanNorm demonstrated robust stability, not only converging successfully but also outperforming PreNorm by a significant margin (achieving an average improvement of up to 4.0). This validates SpanNorm’s unique capacity for ``stable transfer", preventing catastrophic training collapses in industrial-scale scaling.

\input{Figure/fig5-6}

\section{Empirical Analysis}
\label{sec:empirical_ana}
In this section, we provide empirical validation for the theoretical advantages of our SpanNorm architecture discussed in Section~\ref{sec:theoretical_analysis}.

\definecolor{upurple}{RGB}{155,89,182}
\definecolor{ublue}{RGB}{52,152,219}
\definecolor{ured}{RGB}{231,76,60}
\definecolor{udark}{RGB}{77,153,77}
\definecolor{ugreen}{RGB}{46,204,113}
\definecolor{upink}{HTML}{fcd4d4}
\definecolor{ucyan}{HTML}{e3eeff}
\definecolor{uedgecyan}{HTML}{6d97e0}
\definecolor{uedgepink}{HTML}{cc0000}
\tikzset{global scale/.style={
    scale=#1,
    every node/.append style={scale=#1}
  }
}
\begin{figure*}
\scriptsize
\begin{tikzpicture}
\begin{axis}[
    ymajorgrids,
    xlabel=Layer ID,
    grid style=dashed,
    ybar,
    enlarge x limits=0.08,
    xtick align=inside,
    width=0.35\textwidth,
    height=0.2\textheight,
    scaled y ticks = false,
    enlarge y limits={upper,value=0.05},
    legend style={
      fill,
      at={(11em,8em)},
      legend columns=1,
      legend cell align=left,
      anchor=south
    },
    bar width=0.45em,
    ylabel=Gradient Norm,
    xtick=data,
    ymin=0.00,
    ymax=0.065,
    ytick={0.00,0.02,0.04,0.06},
    yticklabels={0.00,0.02,0.04,0.06},
    xtick={2,4,6,8,10,12,14,16,18,20,22,24},
    yticklabel pos=left,
    xlabel style={yshift=0.3em,align=center},
    ylabel style={font=\scriptsize,yshift=-2.em},
    axis on top=false,
]

         \addplot+[ybar, bar shift=-0.2em,fill=ucyan,draw=uedgecyan, area legend] coordinates { 
        (2,0.009009021955231825)(4,0.010273054149465182) (6,0.009084912234759745)
      (8,0.009826151167612467) (10,0.010992018107454576) (12,0.011281054675467868)
      (14,0.011281054675467868) (16,0.01167582216399226) (18,0.011577245278923369)
      (20,0.011748914041588851) (22,0.014097221950949425) (24,0.02431825395970054)};
      \addplot+[ybar, bar shift=0.2em,fill=upink,draw=uedgepink!50, area legend] coordinates { 
        (2,0.00)(4,0.0) (6,0.00 )
      (8,0.0) (10,0.0) (12,0.0)
      (14,0.0) (16,0.0) (18,0.0)
      (20,0.0) (22,0.0) (24,0.0025)};

 \legend{SpanNorm, PostNorm}
    \end{axis}

\begin{axis}[
at={(5.5cm,0)},
    ymajorgrids,
    xlabel=Layer ID,
    grid style=dashed,
    ybar,
    enlarge x limits=0.08,
    xtick align=inside,
    width=0.35\textwidth,
    height=0.2\textheight,
    scaled y ticks = false,
    enlarge y limits={upper,value=0.05},
    legend style={
      fill,
      at={(11em,8em)},
      legend columns=1,
      legend cell align=left,
      anchor=south
    },
    bar width=0.45em,
    xtick=data,
    ymin=0.00,
    ymax=0.065,
    ytick={0.00,0.02,0.04,0.06},
    yticklabels={0.00,0.02,0.04,0.06},
    xtick={2,4,6,8,10,12,14,16,18,20,22,24},
    yticklabel pos=left,
    xlabel style={yshift=0.3em,align=center},
    ylabel style={font=\scriptsize,yshift=-2.em},
    axis on top=false,
]

         \addplot+[ybar, bar shift=-0.2em,fill=ucyan,draw=uedgecyan, area legend] coordinates { 
        (2,0.01921987143896557)(4,0.01776760139627688) (6,0.017189128778467132)
      (8,0.016027388774525763) (10,0.016017301040539396) (12,0.01679113215827082)
      (14,0.019664710748996308) (16,0.01639046110398141) (18,0.015365729729334513)
      (20,0.017192155178358304) (22,0.017324637051727345) (24,0.025702559880319224)};
      \addplot+[ybar, bar shift=0.2em,fill=upink,draw=uedgepink!50, area legend] coordinates { 
        (2,0.00)(4,0.0) (6,0.00 )
      (8,0.0) (10,0.0) (12,0.0)
      (14,0.0) (16,0.0) (18,0.0)
      (20,0.0) (22,0.0) (24,0.005843407604194695)};

 \legend{SpanNorm, PostNorm}
    \end{axis}

\begin{axis}[
at={(11cm,0)},
    ymajorgrids,
    grid style=dashed,
    ybar,
    enlarge x limits=0.08,
    xlabel=Layer ID,
    xtick align=inside,
    width=0.35\textwidth,
    height=0.2\textheight,
    scaled y ticks = false,
    enlarge y limits={upper,value=0.05},
    legend style={
      fill,
      at={(11em,8em)},
      legend columns=1,
      legend cell align=left,
      anchor=south
    },
    bar width=0.4em,
    xtick=data,
    ymin=0.00,
    ymax=0.065,
    ytick={0.00,0.02,0.04,0.06},
    yticklabels={0.00,0.02,0.04,0.06},
    xtick={2,4,6,8,10,12,14,16,18,20,22,24},
    yticklabel pos=left,
    xlabel style={yshift=0.3em,align=center},
    ylabel style={font=\scriptsize,yshift=-2.em},
    axis on top=false,
]

         \addplot+[ybar, bar shift=-0.2em,fill=ucyan,draw=uedgecyan, area legend] coordinates { 
        (2,0.024678650715235454)(4,0.026258366988666022) (6,0.02548402831403177)
      (8,0.02349756635836701) (10,0.025269418287633072) (12,0.027830027405228188)
      (14,0.026005495469368513) (16,0.030711466872795898) (18,0.022791016476219568)
      (20,0.026502083727524647) (22,0.020822284613453333) (24,0.040866521543901954)};
      \addplot+[ybar, bar shift=0.2em,fill=upink,draw=uedgepink!50, area legend] coordinates { 
        (2,0.005098393818233811)(4,0.0069161184428639675) (6,0.009989592876859861)
      (8,0.013014319579844452) (10,0.016743191819063467) (12,0.018897765978651854)
      (14,0.02396184450654841) (16,0.02364558720989014) (18,0.026087525703791362)
      (20,0.02940033006467926) (22,0.03495354846638827) (24,0.05873432927834454)};

 \legend{SpanNorm, PostNorm}
    \end{axis}
   \node at (0.7cm, 2.5cm) {LR=3.2e-4};
   \node at (6.2cm, 2.5cm) {LR=1.6e-4};
   \node at (11.7cm, 2.5cm) {LR=8.0e-5};
\end{tikzpicture}
\caption{Comparison of FFN weight gradient norms between our SpanNorm model and standard PostNorm using the 740M model configuration (detailed in Appendix~\ref{appendix:exp_setups}) across three learning rate conditions. (Left and Middle) At a high learning rate ($> 8 \times 10^{-5}$), PostNorm training collapses, showing near-zero gradients in all but the final layer, while SpanNorm remains stable. (Right) At a lower learning rate ($8 \times 10^{-5}$), PostNorm suffers significant shallow-layer gradient decay, whereas SpanNorm maintains balanced gradients across all layers, demonstrating superior training stability and healthier gradient propagation.}

\label{fig:grad_decay}
\end{figure*}
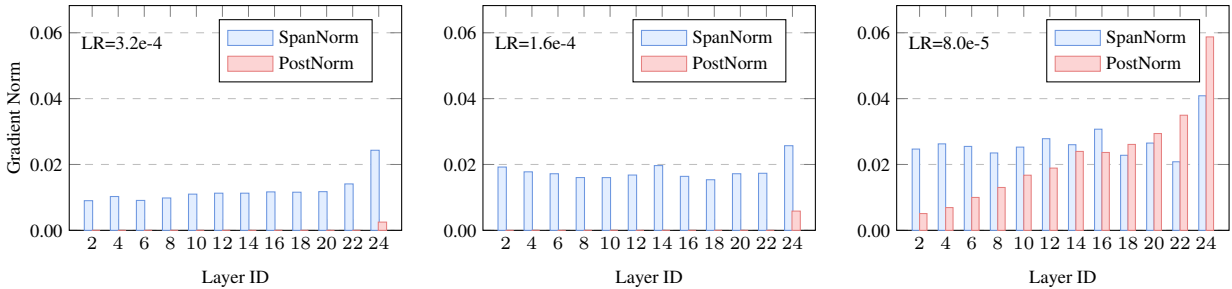

\paragraph{Empirical Validation of Preventing Representation Collapse.}

To empirically validate the issue of representation collapse in PreNorm and assess whether SpanNorm can mitigate this problem, we examine the evolution of its internal representations. The gradient analysis suggests that the Jacobian of a deep PreNorm block converges to an identity matrix, which implies that deeper layers fail to learn new features, causing the output of one layer to approximate the next. 
Figure~\ref{fig:cosine_similarity_collapse} presents the cosine similarity between the hidden state outputs across all layer pairs in the MoE-A2.4B-16B model. This visualization reveals that the contrast between the architectures is most pronounced in deeper layers: while PreNorm retains high similarity, SpanNorm evolves towards markedly distinct representations. 
To confirm this observation quantitatively, we aggregate the similarity scores by layer distance in Figure~\ref{fig:quant_simi}, calculating the mean cosine similarity for all layer pairs $(L_i, L_j)$ separated by a distance $k = |i - j|$.
The results show that PreNorm maintains a stagnant, flat curve ($>0.5$) across all distances, whereas SpanNorm exhibits a rapid, near-linear decay down to $\approx 0.25$. These results demonstrate that SpanNorm effectively drives long-range feature evolution, ensuring that deep layers contribute meaningful information unlike the redundant layers in PreNorm.

\paragraph{Empirical Validation of Mitigated Gradient Decay.}
We empirically validate the severe gradient decay issue in PostNorm and examine whether SpanNorm can alleviate this problem. Our experiments are conducted on a 24-layer model with 740 million parameters. A comparison with a standard PostNorm model across three learning rates demonstrates the distinct advantages of SpanNorm, as illustrated in Figure~\ref{fig:grad_decay}. At a low learning rate ($8 \times 10^{-5}$), SpanNorm sustains well-balanced gradients in contrast to the skewed distribution observed in PostNorm, which indicates pronounced shallow-layer decay. As the learning rate increases, PostNorm undergoes catastrophic collapse, whereas SpanNorm maintains stability. Notably, SpanNorm consistently exhibits a balanced gradient distribution with minimal variance between layers across all learning rate settings. This empirical evidence suggests that the healthy gradient propagation achieved by SpanNorm contributes to more robust training dynamics and extends the stable learning rate range.

\paragraph{Empirical Validation of Spectral Expressivity.} 

Our analysis directly quantifies SpanNorm's enhanced spectral expressivity (as detailed in Appendix~\ref{appendix:spectral_ana}).
First, regarding capacity, SpanNorm achieves an Effective Dimension Ratio~\citep{jha2025spectral} of 30.66, more than doubling PreNorm's 12.99 (as shown in Figure~\ref{fig:spannorm_superiority}). This substantial gap indicates that SpanNorm effectively mitigates representation collapse, ensuring that the model utilizes a much larger portion of its latent capacity to encode diverse features, whereas PreNorm is confined to a low-rank subspace.
Second, the Eigenvalue Distribution (Figure~\ref{fig:input_embedding_eigenvalue}) analysis highlights the geometric nature of this advantage. Unlike PreNorm, which suffers from a sharp singular value decline and collapses into a low-dimensional subspace, SpanNorm preserves a flatter, heavy-tailed spectrum. This slower decay ensures the utilization of diverse eigen-directions, preventing the spectral dilution characteristic of deep PreNorm models. 
Finally, regarding stability, SpanNorm maintains significantly lower condition numbers (Figure~\ref{fig:condition_number}), mitigating the ill-conditioning of deep PreNorm layers. This ensures numerical stability and prevents matrix degeneracy.

\definecolor{myColor}{RGB}{159, 164, 248}

\begin{figure*}[!t]
  \centering
  \begin{tikzpicture}
  \scriptsize

\begin{axis}[
    width=0.65\textwidth, height=0.3\textwidth,
    xlabel=Training Tokens (B),
    ylabel=Training Loss,
    xmin=0.5e9, xmax=10e9,
    ymin=2.8, ymax=5.3,
    xtick={0.5e9,2e9,4e9,6e9,8e9,10e9},
xticklabels={\, ,2,4,6,8,10},
    scaled x ticks=false,   
    ytick={3.0,3.5,4.0,4.5,5.0},
    y tick label style={
        font=\tiny,
        /pgf/number format/fixed,
        /pgf/number format/precision=1,
        /pgf/number format/zerofill,
    },
    ymajorgrids=true,
    xmajorgrids=true,
    grid style=dashdotted,
    legend cell align=left,
    xlabel style={align=center,font=\scriptsize,yshift=1em},
    ylabel style={font=\scriptsize,yshift=-2em},
    y tick style={opacity=0},
    y tick label style={font=\tiny},
    legend style={yshift=-0.2em,xshift=0em,legend cell align=left,legend plot pos=right},  
    legend style={
    at={(0.95,0.95)},
    anchor=north east,
    fill=white,
    fill opacity=0.6,
    font=\tiny,
}
]
  \addplot [sharp plot,orange!60,thick,line width=0.5pt,mark size=0.2pt] 
      table [x=step,y=lm_loss_smoothed,col sep=comma] {Figure/rebuttal_data/layer32-sample.csv};
  \addplot [sharp plot,teal!60,thick,line width=0.5pt,mark size=0.2pt] 
      table [x=step,y=lm_loss_smoothed,col sep=comma] {Figure/rebuttal_data/layer64-sample.csv};
    \addplot [sharp plot,blue!60,thick,line width=0.5pt,mark size=0.2pt] 
      table [x=step,y=lm_loss_smoothed,col sep=comma] {Figure/rebuttal_data/layer128-sample.csv};
  \addplot [sharp plot,red!60,thick,line width=0.5pt,mark size=0.2pt] 
      table [x=step,y=lm_loss_smoothed,col sep=comma] {Figure/rebuttal_data/layer256-sample.csv};
    \addplot [sharp plot,myColor,thick,line width=0.5pt,mark size=0.2pt] 
      table [x=step,y=lm_loss_smoothed,col sep=comma] {Figure/rebuttal_data/layer512-sample.csv};

  \legend{\tiny{Depth 32},\tiny{Depth 64},\tiny{Depth 128},\tiny{Depth 256},\tiny{Depth 512}}
\end{axis}

  \end{tikzpicture}
  \caption{Stress-testing ``Deeper is Better". We train proxy models ($d=384$) with depths scaling exponentially from 32 to 512 using a fixed learning rate. SpanNorm shows a strict monotonic decrease in training loss, confirming robust stability and the effective avoidance of depth degradation at extreme scales.}
    \label{fig:depth_scaling}
\end{figure*}

\begin{table*}[h]
\centering
\setlength{\tabcolsep}{3.5pt}
\caption{Empirical comparison of topological variants on the 5B Dense Model trained for 200B tokens.}
\resizebox{0.95\linewidth}{!}{
\begin{tabular}{lrr|cc|cccccc|c}
\toprule
\textbf{Model} & \textbf{Param} & \textbf{Tokens} & \textbf{Wiki.} & \textbf{LMB.} & \textbf{LMB.} & \textbf{PIQA} & \textbf{Hella.} & \textbf{SciQ} & \textbf{ARC-c} & \textbf{Wino.} & \textbf{Avg.} \\
& & & ppl $\downarrow$ & ppl $\downarrow$ & acc $\uparrow$ & acc $\uparrow$ & acc\_norm $\uparrow$ & acc $\uparrow$ & acc\_norm $\uparrow$ & acc $\uparrow$ & acc $\uparrow$ \\
\midrule
Parallel Transformer & 5B & 200B & 12.3 & 6.7 & 61.3 & 75.4 & 65.1 & 90.4 & 35.1 & 62.7 & 65.0 \\
ResiDual        & 5B & 200B & 12.8 & 7.6 & 59.5 & 74.1 & 63.2 & 89.1 & 33.7 & 61.3 & 63.4 \\
\midrule
SpanNorm             & 5B & 200B & \bf 11.8 & \bf 5.8 & \bf 64.2 & \bf 75.7 & \bf 66.9 & \bf 92.0 & \bf 36.3 & \bf 64.2 & \bf 66.6 \\
\bottomrule
\end{tabular}
}
\label{tab:topology_comparison}
\end{table*}

\paragraph{Stress-Testing ``Deeper is Better'' at Extreme Depths.}
To isolate the impact of depth while keeping the experiment tractable, we construct dense proxy models with a fixed hidden size $d=384$ and vary only the number of layers, setting $L \in \{32, 64, 128, 256, 512\}$.
All models use the same learning rate, $2 \times 10^{-4}$, together with Scale Init, without depth-specific tuning.
This fixed-hyperparameter setup makes depth the primary variable and directly tests robustness to extreme scaling, since unstable architectures typically require increasingly conservative learning rates or additional tuning as depth grows.
Deep PreNorm Transformers often suffer from ``depth degradation,'' where adding layers yields diminishing returns~\citep{sun2025curse}.
In contrast, Figure~\ref{fig:depth_scaling} shows that SpanNorm converges up to 512 layers and follows a monotonic ``deeper is better'' trend as depth doubles.
This behavior suggests two complementary benefits of our design.
First, SpanNorm combined with Scale Init stabilizes gradient propagation under PostNorm-style computation, avoiding the divergence commonly observed in very deep models.
Second, the additional layers remain functionally useful: rather than collapsing toward identity mappings, they continue to contribute to optimization and improve training loss.

\paragraph{Structural Topology Ablations}
Beyond the in-place normalization adjustments evaluated in Section~\ref{sec:experiment} (e.g., HybridNorm, Peri-LN), another prominent line of research focuses on altering the information flow topology of the Transformer block to balance training stability and model performance such as Parallel Transformer~\citep{palm} and ResiDual~\citep{xie2023residual}. Parallel Transformers break the sequential execution of Attention and FFN to achieve higher kernel throughput. ResiDual introduces a dual-stream residual topology to balance gradient flow and feature expressivity. To validate these topological differences, we evaluate Parallel Transformers and Dual Residual architectures using the 5B parameter configuration trained on 200B tokens. As shown in Table~\ref{tab:topology_comparison}, SpanNorm consistently outperforms both topological variants across all evaluated metrics, demonstrating that SpanNorm's approach of extending the residual span while preserving a single-stream sequential computation graph provides a structurally superior and more robust foundation for scaling.

\paragraph{Ablations on GQA.}
We further examine whether SpanNorm delivers consistent improvements with alternative attention variants.
Since our MoE experiments already use MLA, we conduct an additional ablation by replacing MHA with GQA.
Specifically, we use the 740M configuration with 24 query heads and 8 key-value heads, trained on 30B tokens.
As shown in Table~\ref{tab:qga_comparisons}, SpanNorm consistently outperforms the PreNorm baseline under GQA, achieving lower perplexity and higher average downstream accuracy.
This suggests that the benefits of SpanNorm are not tied to a specific attention implementation.

\section{Conclusions}
In this work, we addressed the long-standing trade-off between training stability and model performance in Transformer architectures. We introduced SpanNorm, a novel normalization strategy that synergistically combines the stability of the PreNorm design with the performance benefits of the PostNorm computation. Our theoretical analysis confirmed that SpanNorm ensures stable signal propagation, enabling the training of exceptionally deep and large models without encountering the gradient issues that typically hinder PostNorm architectures. Empirically, SpanNorm consistently outperforms existing normalization methods across various model scales. By resolving a critical bottleneck in LLM training, our work paves the way for the development of more powerful, efficient, and scalable models.

\section*{Impact Statement}
This paper presents work whose goal is to advance the field of machine learning. There are many potential societal consequences of our work, none of which we feel must be specifically highlighted here.


\bibliography{example_paper}
\bibliographystyle{icml2026}


\newpage
\appendix
\onecolumn

\section{Theoretical Proofs and Empirical Justifications}
\label{app:theory_and_verification}
\subsection{Proof of Theorem~\ref{thm:depth_scale}}
\label{app:proof_stability}
\begin{proof}
The forward pass for the $l$-th layer of our SpanNorm architecture is given by:
\begin{align}
    \mathbf{Y}_{l} &= \text{LN}\left(\text{MHA}(\mathbf{X}'_{l}) + \mathbf{X}'_{l}\right) \\
    \mathbf{X}'_{l+1} &= \text{LN}\left(\text{FFN}(\mathbf{Y}_{l}) + \mathbf{X}'_{l}\right)
\end{align}
The stability of backpropagation depends on the spectral norm of the Jacobian matrix, $\left\|\frac{\partial \mathbf{X}'_{l+1}}{\partial \mathbf{X}'_l}\right\|_2$. Let $\mathbf{Z}_l = \text{FFN}(\mathbf{Y}_{l}) + \mathbf{X}'_{l}$, $\mathbf{A}_l=\text{MHA}(\mathbf{X}'_{l}) + \mathbf{X}'_{l}$. By the chain rule and the submultiplicativity of spectral norms, we have:
\begin{equation}
    \left\|\frac{\partial \mathbf{X}'_{l+1}}{\partial \mathbf{X}'_l}\right\|_2 \le \left\|\frac{\partial \text{LN}(\mathbf{Z}_l)}{\partial \mathbf{Z}_l}\right\|_2 \left\|\frac{\partial \mathbf{Z}_l}{\partial \mathbf{X}'_l}\right\|_2
\end{equation}
The spectral norm of the Layer Normalization Jacobian is approximately the inverse of the standard deviation of its input, $||\frac{\partial \text{LN}(\mathbf{Z}_l)}{\partial \mathbf{Z}_l}||_2 \approx \frac{1}{\sigma_{\mathbf{Z}_l}}$. The second term can be bounded using the triangle inequality (where FFN is denoted as $\mathcal{F}$):
\begin{align}
    \left\|\frac{\partial \mathbf{Z}_l}{\partial \mathbf{X}'_l}\right\|_2 &= \left\| \mathbb{I} + \frac{\partial \mathcal{F}(\mathbf{Y}_{l})}{\partial \mathbf{X}'_l} \right\|_2 \le 1 + \left\| \frac{\partial \mathcal{F}(\mathbf{Y}_{l})}{\partial \mathbf{X}'_l} \right\|_2
\end{align}
The term $\left\| \frac{\partial \mathcal{F}(\mathbf{Y}_{l})}{\partial \mathbf{X}'_l} \right\|_2$ can be further decomposed using the chain rule and submultiplicativity:
\begin{equation}
    \left\| \frac{\partial \mathcal{F}(\mathbf{Y}_{l})}{\partial \mathbf{X}'_l} \right\|_2 = \left\| \frac{\partial \mathcal{F}}{\partial \mathbf{Y}_{l}} \frac{\partial \mathbf{Y}_{l}}{\partial \mathbf{X}'_{l}} \right\|_2 \le \left\| \frac{\partial \mathcal{F}}{\partial \mathbf{Y}_{l}} \right\|_2 \left\| \frac{\partial \mathbf{Y}_{l}}{\partial \mathbf{X}'_{l}} \right\|_2
\end{equation}

In the context of feedforward neural networks (FFNs), the Jacobian norm, represented as $||\frac{\partial \mathcal{F}}{\partial \mathbf{Y}_{l}}||_2$, can be approximated by the expression $||\frac{\partial \mathcal{F}}{\partial \mathbf{Y}_{l}}||_2\approx ||\mathbf{W}_1 \mathbf{W}_2||_2$. Here, $\mathbf{W}_1$ and $\mathbf{W}_2$ denote the weight matrices associated with the FFN's two linear layers, under the assumption of an identity activation function~\citep{takase2023spike}.

For the Jacobian of the first sub-layer, $\frac{\partial \mathbf{Y}_{l}}{\partial \mathbf{X}'_{l}}$, the formulation is more intricate. It can be bounded as $||\frac{\partial \mathbf{Y}_{l}}{\partial \mathbf{X}'_{l}}||_2 \approx \frac{1}{\sigma_{\mathbf{A}_l}} ||\mathbb{I} + \frac{\partial \text{MHA}}{\partial \mathbf{X}'_l}||_2$. The Jacobian of the attention block, $\frac{\partial \text{MHA}}{\partial \mathbf{X}'_l}$, is further decomposed into the output projection matrix $\mathbf{W}_O$ and the preceding multi-head attention mechanism, which encompasses queries, keys, values, and the softmax operation. We denote the Jacobian of this attention mechanism as $\mathbf{J}^{\mathcal{A}'}$. Consequently, we have $||\frac{\partial \text{MHA}(\mathbf{X}'_l)}{\partial \mathbf{X}'_l}||_2 = ||\mathbf{W}_O \mathbf{J}^{\mathcal{A}'}||_2 $.

Combining these results, we obtain the final upper bound on the Jacobian's spectral norm:
\begin{equation}
    \left\|\frac{\partial \mathbf{X}'_{l+1}}{\partial \mathbf{X}'_l}\right\|_2 \lesssim \frac{1}{\sigma_{\mathbf{Z}_l}} \left( 1 + \frac{\|\mathbf{W}_2 \mathbf{W}_1\|_2 (1 + \|\mathbf{W}_O \mathbf{J}^{\mathcal{A}'}\|_2)}{\sigma_{\mathbf{A}_l}} \right)
\end{equation}
For stable training through $L$ layers, the cumulative product of these Jacobians must not vanish or explode. This is critically dependent on the outer scaling factor $\frac{1}{\sigma_{\mathbf{Z}_l}}$ at each layer. For the product to remain of order $\mathcal{O}(1)$ as $L \to \infty$, it is required that $\sigma_{\mathbf{Z}_l} = 1 + \mathcal{O}(1/L)$.

We analyze the variance of $\mathbf{Z}_l$. By the definition of LayerNorm, the input $\mathbf{X}_l'$ has unit variance, i.e., $\text{Var}(\mathbf{X}_l') = 1$.  The input to the FFN, $\mathbf{Y}_l$, is similarly normalized. Under the standard initialization assumption that the residual branch $\mathcal{F}(\mathbf{Y}_l)$ and the identity path $\mathbf{X}_l'$ are uncorrelated, the variance of their sum is the sum of their variances:
\begin{equation}
    \text{Var}(\mathbf{Z}_l) = \text{Var}(\mathbf{X}_l') + \text{Var}(\mathcal{F}(\mathbf{Y}_{l})) = 1 + \text{Var}(\mathcal{F}(\mathbf{Y}_{l}))
\end{equation}
For the condition $\sigma_{\mathbf{Z}_l} = 1 + \mathcal{O}(1/L)$ to hold, the variance must be $\text{Var}(\mathbf{Z}_l) = (1 + \mathcal{O}(1/L))^2 = 1 + \mathcal{O}(1/L)$. Comparing this with our derived variance, we arrive at the central condition:
\begin{equation}
    \text{Var}(\mathcal{F}(\mathbf{Y}_{l})) = \mathcal{O}(1/L)
\end{equation}
This completes the proof.
\end{proof}

\begin{table}[h]
\centering
\setlength{\tabcolsep}{3.5pt}
\caption{Empirical comparison of \method and PreNorm with GQA.}
\resizebox{0.95\linewidth}{!}{
\begin{tabular}{lrr|cc|cccccc|c}
\toprule
\textbf{Model} & \textbf{Param} & \textbf{Tokens} & \textbf{Wiki.} & \textbf{LMB.} & \textbf{LMB.} & \textbf{PIQA} & \textbf{Hella.} & \textbf{SciQ} & \textbf{ARC-c} & \textbf{Wino.} & \textbf{Avg.} \\
& & & ppl $\downarrow$ & ppl $\downarrow$ & acc $\uparrow$ & acc $\uparrow$ & acc\_norm $\uparrow$ & acc $\uparrow$ & acc\_norm $\uparrow$ & acc $\uparrow$ & acc $\uparrow$ \\
\midrule
PreNorm (w/ GQA)            & 740M & 30B & 22.1 & 22.1 & 39.3 & 67.5 & 42.0 & 80.0 & 23.2 & \bf 52.5 & 50.8 \\
\midrule
SpanNorm (w/ GQA)           & 740M & 30B & \bf 20.4 & \bf 21.5 & \bf 40.5 & \bf 68.3 & \bf 43.5 & \bf 82.1 & \bf 24.5 & 52.1 & \bf 51.8 \\
\bottomrule
\end{tabular}
}
\label{tab:qga_comparisons}
\end{table}

\subsection{Empirical Justification of Assumption~\ref{assump}}
To support the validity of the Homogeneous Variance assumption~\ref{assump}, we closely monitor the empirical statistics of the pre-normalized sums for both Attention ($\sigma_{A_l}$) and FFN ($\sigma_{Z_l}$) sub-layers. These metrics were tracked across all layers of our 64-layer 5B model configuration during actual training. 

As summarized in Table~\ref{tab:assumption_variance_ratio}, the empirical standard deviation $\sigma$ remains bounded and strictly greater than $1$ across all network depths. Crucially, the ratio of $\sigma_{Z_l}$ to $\sigma_{A_l}$ remains tightly centered around $1.08$, with values bounded between $0.96$ and $1.19$. This stable horizontal trajectory across all layers confirms that the sub-layer signals neither explode nor vanish during scaling. The empirical evidence demonstrates that deep layers operate under a homogeneous variance regime, fully validating the theoretical simplification adopted in our analysis.
\begin{table}[h]
\centering
\small
\setlength{\tabcolsep}{4.5pt}
\caption{Layer-wise empirical standard deviation ratios $\sigma_{Z_l} / \sigma_{A_l}$ sampled across depths of the 64-layer 5B model.}
\begin{tabular}{l|cccccccc}
\toprule
\textbf{Layer ID} & 4 & 8 & 12 & 16 & 20 & 24 & 28 & 32 \\
\midrule
$\sigma_{Z_l}$ & 1.3302 & 1.1277 & 1.1543 & 1.2008 & 1.1771 & 1.2450 & 1.2154 & 1.2526 \\
$\sigma_{A_l}$ & 1.1487 & 1.1791 & 1.1496 & 1.1183 & 1.0968 & 1.1394 & 1.1197 & 1.1390 \\
\textbf{Ratio} & 1.16 & 0.96 & 1.00 & 1.07 & 1.07 & 1.09 & 1.09 & 1.10 \\
\midrule
\midrule
\textbf{Layer ID} & 36 & 40 & 44 & 48 & 52 & 56 & 60 & 64 \\
\midrule
$\sigma_{Z_l}$ & 1.1063 & 1.1148 & 1.1481 & 1.3363 & 1.3185 & 1.2252 & 1.1950 & 1.3201 \\
$\sigma_{A_l}$ & 1.0963 & 1.0923 & 1.1061 & 1.1209 & 1.1163 & 1.1082 & 1.0549 & 1.2007 \\
\textbf{Ratio} & 1.01 & 1.02 & 1.04 & 1.19 & 1.18 & 1.11 & 1.13 & 1.10 \\
\bottomrule
\end{tabular}
\label{tab:assumption_variance_ratio}
\end{table}

\section{Detailed Experimental Setups}
\label{appendix:exp_setups}
\paragraph{Baseline} We evaluate our proposed SpanNorm on both dense and MoE settings. The PreNorm and PostNorm are two major baselines, and others like LayerNorm Scaling~\citep{sun2025curse}, HybridNorm~\citep{hybridnorm25zhuo}, Mix-LN~\citep{mixln25Li}, and Peri-LN~\citep{kim2025peri} are stronger baselines that we have comprehensively compared. 

\paragraph{Model Configurations} We mainly evaluate our dense models across three configurations as follows: (1) 740M: The hidden dimension is 1536, and the intermediate size of FFN is 4224. The head is 24, with a head dimension of 64. The 740M is a 24-layer model. (2) 1.3B: The hidden dimension is 1536, and the intermediate size of FFN is 4224. The head is 24. We scaled the depth to twice that of the 740M model, which leads to a 48-layer model. (3) 5B: Our largest size model has a hidden dimension of 2560, an intermediate size of 6912, with 40 attention heads and a depth of 64. For the MoE model, we use the Deepseek-V3 small-scale model~\citep{liu2024deepseek}, activated by 2.4 billion parameters within approximately 16 billion parameters in total. It employs an MLA architecture, activating 6 out of 64 experts. The model consists of a total of 27 layers, with a hidden dimension of 2048.

\paragraph{Training Hyperparameters} We employ the AdamW optimizer~\citep{kingma2014adam} with hyper-parameters set to $\beta_1=0.9$ and $\beta_2=0.95$, with weight decay set to 0.1. We set the max sequence length to 2K during pretraining with 200B tokens uniformly sampled from SlimPajama. As for the learning rate schedule, we first linearly increase the learning rate from 0 to the peak value, e.g., $2\times 10^{-4}$, during the first 200 steps. Then the learning rate decays following a cosine curve until meeting the minimum learning rate predefined. Our batch size is set to 512K for a 740M model, 1M for a 1.3B model, and 4M for a 5B model. The gradient clipping norm is set to 1.0. We chose the Megatron initialization~\citep{shoeybi2019megatron} (referred to as `Scale Init' in Section~\ref{sec:depth_scaling}) as our default strategy for all models, including the baselines (PostNorm, PreNorm, HybridNorm, etc.). This isolates the impact of the architecture from the initialization scheme.

To ensure a fair and rigorous comparison in the 5B dense model experiments in Table~\ref{tab:against_other_variants}, we carefully calibrated the configurations for each model:
\begin{itemize}
\item Standard Learning Rate ($2 \times 10^{-4}$): We adopted this commonly used learning rate for the PreNorm baseline and SpanNorm. Additionally, LayerNorm Scaling and Mix-LN were also trained using this standard $2 \times 10^{-4}$ learning rate. Note that for Mix-LN, while the learning rate was kept standard, we performed a trade-off analysis on the architecture, finding that a configuration of 16 PostNorm layers and 48 PreNorm layers yielded the lowest training loss.
\item Tuned Learning Rate ($1 \times 10^{-4}$): Certain baselines required lower learning rates for stability or optimal performance. For HybridNorm, we observed severe gradient vanishing with learning rates of $2\times 10^{-4}$ and $1.5\times 10^{-4}$; thus, we report the optimal stable result at $1\times 10^{-4}$. Similarly, for Peri-LN, we empirically found that a $1\times 10^{-4}$ learning rate yielded significantly better convergence and performance than $2\times 10^{-4}$ for the Gemma-2 style architecture.
\end{itemize}
For the deep scaling experiments involving the 128-layer 6.5B dense model, to keep a fair comparison, we search for the best learning rate on PreNorm models, and set $1.2 \times 10^{-4}$ as the default setting for all other models (including Mix-LN, HybridNorm, PostNorm and SpanNorm).

Regarding the MoE experiments in Table~\ref{tab:lm-evaluation-harness}, we utilized identical configurations for both the PreNorm baseline and our SpanNorm model, setting the learning rate to $2 \times 10^{-4}$.

\paragraph{Dataset and Evaluation}
Following the setup in previous studies, we randomly sampled two distinct subsets, consisting of 200B and 400B tokens respectively, from the widely available open-source SlimPajama dataset. To evaluate the performance of LLMs, we report the results from the LM Harness evaluation, including Lambada, PIQA, HellaSwag, SciQ, ARC-c, and WinoGrande.

\paragraph{PyTorch-Style Implementation of SpanNorm}
We provide a PyTorch-style implementation of SpanNorm below.
\begin{algorithm}
    \caption{Pseudocode for a Transformer block with SpanNorm}
    \label{alg:pytorch_transformer}
    
    \begin{lstlisting}
# attn is one of attention implementations, it could be MHA, GQA, MLA, etc.
# q_proj, k_proj, v_proj are the projection layers for the query, key, and value.
# ffn is the feedforward layer, it could be replaced with MoE.

def forward(hidden_states):
    res = hidden_states
    
    # Attention block
    q = q_proj(hidden_states)
    k = k_proj(hidden_states)
    v = v_proj(hidden_states)
    hidden_states = attn(q, k, v)
    hidden_states = layer_norm(hidden_states + res)

    # FFN block
    hidden_states = ffn(hidden_states)
    hidden_states = layer_norm(hidden_states + (*@\textcolor{red}{\ttfamily res}@*))

    return hidden_states
    \end{lstlisting}
\end{algorithm}

\section{Spectral Analysis of Feature Representations}
\label{appendix:spectral_ana}

\subsection{Analysis on Spectral Utilization}
\label{sec:spectral_utiliz}
\input{Figure/spannorm_superiority}
To quantify how effectively the FFN blocks utilize their high-dimensional latent space, we employ three diagnostic metrics: Hard Spectral Rank, Soft Spectral Rank, and the composite Effective Dimension Ratio (EDR)~\citep{jha2025spectral}, as visualized in Figure~\ref{fig:spannorm_superiority}. We can observe that PreNorm suffers from severe spectral collapse, exhibiting the lowest utilization across all metrics (e.g., an average normalized HardRank of only 0.09). In contrast, SpanNorm demonstrates superior representational capability. For the dominant modes, SpanNorm achieves the highest average Hard Rank of 0.23, significantly outperforming the baselines. Regarding the spectral tail, SpanNorm ties for the top performance in Soft Rank (0.53), indicating it effectively prevents spectral dilution. Most notably, in terms of the Effective Dimension Ratio—which serves as a holistic measure of latent space efficiency—SpanNorm dominates with a score of 30.66, surpassing the second-best HybridNorm (27.12) and more than doubling the effective capacity of PreNorm (12.99). The layer-wise dynamics further reveal that SpanNorm maintains high and stable spectral utilization throughout the entire network depth. This empirical evidence demonstrates that SpanNorm successfully mitigates representational collapse, maximizing the expressivity of the learned features.

\subsection{Analysis on Eigenvalue Distribution}
\label{appendix:spectral_eigenvalue}

\begin{figure}[!t]
  \centering
  \begin{tikzpicture}
  \scriptsize

  
  \begin{axis}[
    width=0.6\textwidth, height=0.4\textwidth,
    xlabel=Normalized Rank,
    ylabel=Eigenvalue/median(Eigenvalue),
    xmin=0.0, xmax=1.0,
    ymin=0.01, ymax=250,
    ymode=log,
    log basis y={10},
    ytick={0.1,1,10,100},
    yticklabels={$10^{-1}$,$10^{0}$,$10^{1}$,$10^{2}$},
    ymajorgrids=true,
    xmajorgrids=true,
    grid style=dashdotted,
    legend cell align=left,
    scaled ticks=false,
    xlabel style={align=center,font=\scriptsize,yshift=1em},
    ylabel style={font=\scriptsize,yshift=-2.em},
    y tick style={opacity=0},
    x tick label style={
    font=\tiny,
    /pgf/number format/fixed,
    /pgf/number format/precision=1,
    /pgf/number format/fixed zerofill,
},
y tick label style={font=\tiny},
    legend style={yshift=-0.5em,xshift=0em,legend cell align=left,legend plot pos=left},
]

    \addplot [sharp plot,orange!60,thick,line width=0.5pt,mark size=0.2pt] 
      table [x=normalized_rank,y=normalized_eigenvalues,col sep=comma] {Figure/rebuttal_data/PreNorm_data.csv};
       \addplot [sharp plot,teal!60,thick,line width=0.5pt,mark size=0.2pt] 
      table [x=normalized_rank,y=normalized_eigenvalues,col sep=comma] {Figure/rebuttal_data/MixLN_data.csv};
      \addplot [sharp plot,blue!60,thick,line width=0.5pt,mark size=0.2pt] 
      table [x=normalized_rank,y=normalized_eigenvalues,col sep=comma] {Figure/rebuttal_data/HybridNorm_data.csv};
      
      \addplot [sharp plot,red!60,thick,line width=0.5pt,mark size=0.2pt] 
      table [x=normalized_rank,y=normalized_eigenvalues,col sep=comma] {Figure/rebuttal_data/FuseNorm_data.csv};
    
    \legend{\tiny{PreNorm},\tiny{Mix-LN},\tiny{HybridNorm},\tiny{SpanNorm}}

  \end{axis}

  \end{tikzpicture}

    \caption{Eigenspectrum Analysis of Input Embeddings.
    We analyze the distribution of sorted eigenvalues normalized by their median value (log scale). 
    While Mix-LN (teal) mitigates the extreme outliers seen in PreNorm, we observe that it suffers from a rapid decay in the tail spectrum, indicating potential dimensional collapse. 
    In contrast, we demonstrate that SpanNorm (red) maintains a significantly flatter and more uniform trajectory across the entire rank, ensuring a more isotropic representation space with higher effective capacity.}
    
  \label{fig:input_embedding_eigenvalue}
\end{figure}
Following the setup in \citet{loshchilov2025ngpt}'s work, we mainly plot the sorted eigenvalues of the input embedding as shown in Figure~\ref{fig:input_embedding_eigenvalue}. We can observe that SpanNorm exhibits a much flatter and more uniform spectrum across all ranks, while PreNorm and HybridNorm are dominated by a few large outliers and Mix-LN, although mitigating these outliers in the head, suffers from a rapid decay in the tail spectrum—indicating potential dimensional collapse. Moreover, we also plot the condition number of MLP1 (up-projection matrix) and MLP2 (down-projection matrix) at each layer. The left part in Figure~\ref{fig:condition_number} shows the condition numbers for MLP matrices at different layer depths, and the right part summarizes the corresponding average condition number. Our SpanNorm exhibits the smallest condition numbers in the MLP1 matrix. When switching to the MLP2, we see the condition number is the smallest across all 4 models at the bottom 6 and top 15 layers. The observation here demonstrates the ability of SpanNorm to preserve a well-conditioned and expressive latent space throughout the entire network depth.

Finally, we note that \citet{pmlr-v267-nait-saada25a} investigated stable-rank collapse at initialization. However, their theory specifically targets bidirectional attention, which differs from our causal autoregressive setting. Nevertheless, our analysis shows that topological modifications outside the attention mechanism, like SpanNorm, do not alter this Softmax-driven static initialization gap. This clarifies the exact boundary of our method: SpanNorm is not designed to fix static rank collapse at initialization, but rather to resolve dynamic training pathologies, namely the gradient vanishing of PostNorm and the representation collapse of PreNorm, as network depth scales.

\begin{figure*}[!t]
\centering
\scriptsize
\begin{minipage}{0.48\textwidth}
\centering
\begin{tikzpicture}
\begin{axis}[
    width=\textwidth, height=0.7\textwidth,
    xlabel=Layer Index,
    ylabel=Condition Number,
    xmin=0, xmax=63,
    ymin=0, ymax=74,
    xtick={0,20,40,60},
    xticklabels={1,21,41,61},
    ytick={0,20,40,60},
    ymajorgrids=true,
    xmajorgrids=true,
    grid style=dashdotted,
    legend cell align=left,
    scaled ticks=false,
    xlabel style={align=center,font=\scriptsize,yshift=1em},
    ylabel style={font=\scriptsize,yshift=-2.em},
    y tick style={opacity=0},
    x tick label style={font=\tiny},
    y tick label style={font=\tiny},
]
    \addplot [sharp plot,orange!60,mark=diamond*,
    mark size=1pt,
    mark repeat=8,thick] 
      table [x=layer,y=PreNorm,col sep=comma] {Figure/rebuttal_data/layer_wise_mlp2.csv};
    \addplot [sharp plot,teal!60,mark=square*,mark size=0.5pt,
    mark repeat=8,thick] 
      table [x=layer,y=MixLN,col sep=comma] {Figure/rebuttal_data/layer_wise_mlp2.csv};
    \addplot [sharp plot,blue!60,mark=*,mark size=0.5pt,
    mark repeat=8,thick] 
      table [x=layer,y=HybridNorm,col sep=comma] {Figure/rebuttal_data/layer_wise_mlp2.csv};
    \addplot [sharp plot,red!60,mark=triangle*,mark size=0.7pt,
    mark repeat=8,thick] 
      table [x=layer,y=MixNorm,col sep=comma] {Figure/rebuttal_data/layer_wise_mlp2.csv};
    \legend{\tiny{PreNorm},\tiny{Mix-LN},\tiny{HybridNorm},\tiny{SpanNorm}}
\end{axis}
\end{tikzpicture}
\end{minipage}
\hfill
\begin{minipage}{0.48\textwidth}
\centering
\begin{tikzpicture}
\begin{axis}[
    ymajorgrids,
    xlabel=Average Condition Numbers,
    grid style=dashed,
    ybar,
    enlarge x limits=0.08,
    xtick align=inside,
    width=\textwidth,
    height=0.20\textheight,
    scaled y ticks = false,
    enlarge y limits={upper,value=0.05},
    bar width=2em,
    ylabel=Mean Condition Number,
    xtick={2,4,6,8},
    xticklabels={PreNorm,Mix-LN,HybridNorm,SpanNorm},
    ymin=13, ymax=19.5,
    ytick={14,15,16,17,18,19},
    yticklabel={\pgfmathprintnumber[fixed,precision=1]{\tick}},
    yticklabel pos=left,
    xlabel style={yshift=0.3em,align=center},
    ylabel style={font=\scriptsize,yshift=-2.em},
    axis on top=true,
    major tick length=2pt,  
]

\addplot+[
    ybar,
    fill=pink,
    draw=uedgepink!50,
    area legend,
    point meta=y,
    nodes near coords={%
        \pgfmathprintnumber[fixed,precision=1,zerofill]{\pgfplotspointmeta}%
    },
    nodes near coords style={
        font=\scriptsize,
        anchor=south,
        text=black,
    }
] coordinates { 
    (2,18.78639243543148)
    (4,15.971718594431877)
    (6,14.165171384811401)
    (8,13.686958938837051)
};

\end{axis}
\end{tikzpicture}
\end{minipage}

\begin{minipage}{0.48\textwidth}
\centering
\begin{tikzpicture}
\begin{axis}[
    width=\textwidth, height=0.7\textwidth,
    xlabel=Layer Index,
    ylabel=Condition Number,
    xmin=0, xmax=63,
    ymin=0, ymax=170,
    xtick={0,20,40,60},
    xticklabels={1,21,41,61},
    ytick={0,20,40,60,80,100,120,140,160},
    ymajorgrids=true,
    xmajorgrids=true,
    grid style=dashdotted,
    legend cell align=left,
    scaled ticks=false,
    xlabel style={align=center,font=\scriptsize,yshift=1em},
    ylabel style={font=\scriptsize,yshift=-2.em},
    y tick style={opacity=0},
    x tick label style={font=\tiny},
    y tick label style={font=\tiny},
]
    \addplot [sharp plot,orange!60,mark=diamond*,
    mark size=1pt,
    mark repeat=8,thick] 
      table [x=layer,y=PreNorm,col sep=comma] {Figure/rebuttal_data/layer_wise_mlp1.csv};
    \addplot [sharp plot,teal!60,mark=square*,mark size=0.5pt,
    mark repeat=8,thick] 
      table [x=layer,y=MixLN,col sep=comma] {Figure/rebuttal_data/layer_wise_mlp1.csv};
    \addplot [sharp plot,blue!60,mark=*,mark size=0.5pt,
    mark repeat=8,thick] 
      table [x=layer,y=HybridNorm,col sep=comma] {Figure/rebuttal_data/layer_wise_mlp1.csv};
    \addplot [sharp plot,red!60,mark=triangle*,mark size=0.7pt,
    mark repeat=8,thick] 
      table [x=layer,y=MixNorm,col sep=comma] {Figure/rebuttal_data/layer_wise_mlp1.csv};
    \legend{\tiny{PreNorm},\tiny{Mix-LN},\tiny{HybridNorm},\tiny{SpanNorm}}
\end{axis}
\end{tikzpicture}
\end{minipage}
\hfill
\begin{minipage}{0.48\textwidth}
\centering
\begin{tikzpicture}
\begin{axis}[
    ymajorgrids,
    xlabel=Average Condition Numbers,
    grid style=dashed,
    ybar,
    enlarge x limits=0.08,
    xtick align=inside,
    width=\textwidth,
    height=0.20\textheight,
    scaled y ticks = false,
    enlarge y limits={upper,value=0.05},
    bar width=2em,
    ylabel=Mean Condition Number,
    xtick={2,4,6,8},
    xticklabels={PreNorm,Mix-LN,HybridNorm,SpanNorm},
    ymin=13, ymax=40,
    ytick={20,30,40},
    yticklabel={\pgfmathprintnumber[fixed,precision=1]{\tick}},
    yticklabel pos=left,
    xlabel style={yshift=0.3em,align=center},
    ylabel style={font=\scriptsize,yshift=-2.em},
    axis on top=true,
    major tick length=2pt,  
]

\addplot+[
    ybar,
    fill=pink,
    draw=uedgepink!50,
    area legend,
    point meta=y,
    nodes near coords={%
        \pgfmathprintnumber[fixed,precision=1,zerofill]{\pgfplotspointmeta}%
    },
    nodes near coords style={
        font=\scriptsize,
        anchor=south,
        text=black,
    }
] coordinates { 
    (2,36.96567448973656)
    (4,29.36000519990921)
    (6,26.08288538455963)
    (8,15.13991855084896)
};

\end{axis}
\end{tikzpicture}
\end{minipage}
\caption{Layer-wise condition number analysis of FFN weights. 
We analyze the spectral properties of the MLP1 (up-projection) and MLP2 (down-projection) matrices across the network depth.
SpanNorm demonstrates superior stability, maintaining significantly lower condition numbers, particularly in the deeper layers, which effectively prevents spectral degradation.
In contrast, PreNorm exhibits elevated condition numbers in deeper blocks, reflecting the issue of deep layer degeneration where the representation capacity collapses.
This comparison highlights SpanNorm's ability to preserve a well-conditioned and expressive latent space throughout the network depth.}


\label{fig:condition_number}
\end{figure*}

\section{Width Scaling and Predictability}
\label{app:width_scaling}

While we prioritize depth scaling for structural stability, handling width scaling is equally critical for large-scale training. We empirically confirm that SpanNorm follows the width scaling protocols proposed by \citet{reparam}. Crucially, this adherence facilitates the predictive stability transfer observed in our experiments.

As illustrated in Figure~\ref{fig:stable_transfer}, we rigorously validate the transferability of stability thresholds across model widths. By fixing the depth and scaling the hidden dimension, we compare the training dynamics of a 310M proxy model ($d=640$) against a 5B target model ($d=2560$). The results confirm a robust predictive relationship: learning rates that fail on the proxy also fail on the target, while those stable on the proxy remain stable on the target. This stable transfer enables the use of efficient proxies to reliably identify safe hyperparameters, eliminating the need for risky tuning on large models. This is of paramount importance in industrial-scale model training, since failures incur prohibitive economic and temporal costs in industrial-scale training.
\begin{figure*}[!t]
  \centering
  \includegraphics[width=0.5\textwidth]{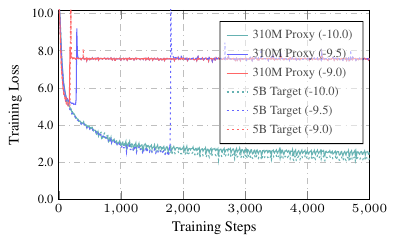}
  \caption{Validation of width scaling stability transfer. We compare a 310M proxy (hidden dimension $d=640$) and a 5B target (\textbf{$d=2560$}) under a fixed depth ($L=64$).
}
  \label{fig:stable_transfer}
\end{figure*}
To further assess the impact of this specific scaling strategy on downstream performance, we conducted a controlled ablation study on the 5B parameter dense model. Table~\ref{tab:width_scaling_ablation} compares the standard PreNorm baseline against two SpanNorm configurations:
\begin{itemize}
    \item SpanNorm (w/o WS): Uses the exact same hyperparameters (e.g., learning rate) as the PreNorm baseline.
    \item SpanNorm (w/ WS): Applies our proposed Width Scaling strategy to adjust the learning rate based on the model width.
\end{itemize}
As shown in the table, the ``w/o WS" setting yields slightly higher average accuracy (66.6 vs 65.9). We attribute this slight gap to the coarseness of the search grid used for the proxy model. The optimal learning rate identified on the proxy was constrained by a coarse discrete grid, meaning the transferred value ($2.4\times 10^{-4}$) was an approximation rather than a precise optimum. A finer-grained search on the proxy would likely narrow this gap. Crucially, the Width Scaling strategy offers a predictable path for hyperparameter transfer, providing a safe baseline that guarantees stability without the need for expensive tuning on the target model.

\begin{table}[h]
\centering
\setlength{\tabcolsep}{3.5pt}
\caption{Ablation study of Width Scaling (WS) on the 5B Dense Model trained for 200B tokens. ``w/o WS" denotes using the same hyperparameters as the PreNorm baseline, while ``w/ WS" applies the width scaling strategy.}
\resizebox{0.95\linewidth}{!}{\begin{tabular}{lrr|cc|cccccc|c}
\toprule
\textbf{Model} & \textbf{Param} & \textbf{Tokens} & \textbf{Wiki.} & \textbf{LMB.} & \textbf{LMB.} & \textbf{PIQA} & \textbf{Hella.} & \textbf{SciQ} & \textbf{ARC-c} & \textbf{Wino.} & \textbf{Avg.} \\
& & & ppl $\downarrow$ & ppl $\downarrow$ & acc $\uparrow$ & acc $\uparrow$ & acc\_norm $\uparrow$ & acc $\uparrow$ & acc\_norm $\uparrow$ & acc $\uparrow$ & acc $\uparrow$ \\
\midrule
PreNorm & 5B & 200B & 12.5 & 6.9 & 61.4 & 73.7 & 64.4 & 90.7 & 34.4 & 60.1 & 64.1 \\
SpanNorm (w/ WS) & 5B & 200B & \textbf{11.7} & 6.4 & 63.9 & \bf 75.8 & \bf 67.2 & 91.6 & 35.2 & 61.5 & 65.9 \\
SpanNorm (w/o WS) & 5B & 200B & 11.8 & \bf 5.8 & \bf 64.2 & 75.7 & 66.9 & \bf 92.0 & \bf 36.3 & \bf 64.2 & \bf 66.6 \\
\bottomrule
\end{tabular}}
\label{tab:width_scaling_ablation}
\end{table}

\section{Computational Efficiency Analysis}
\label{app:efficiency}

In this section, we empirically evaluate the training efficiency of SpanNorm compared to the standard PreNorm baseline. We recorded the real-time training throughput (measured in tokens per second) for the 5B parameter model experiments described in Section~\ref{sec:experiment}. As illustrated in Figure~\ref{fig:throughput}, the training speed of SpanNorm is virtually identical to that of the PreNorm baseline. The vertical dashed line in the figure marks a transition point at 21.5B tokens where we scaled up the Data Parallelism (DP) degree, resulting in a step increase in global throughput. Crucially, SpanNorm maintains consistent efficiency both before and after this scaling event. This result demonstrates that the architectural modifications and altered normalization placement in SpanNorm incur no computational overhead relative to the standard PreNorm baseline, making SpanNorm a practical drop-in replacement for large-scale training.
\input{Figure/fig13_throughput}

\section{Detailed Gradient Dynamics Analysis}
In Section~\ref{sec:depth_scaling}, we proposed a principled ``Scale Init'' strategy to mitigate the risk of gradient vanishing in deep networks. Figure~\ref{fig:grad_seed} provides a comprehensive layer-wise visualization of the gradient norms for all 24 layers, offering a direct comparison between the standard Global Initialization and our proposed Scale Initialization. This detailed view confirms two critical observations:
\begin{itemize}
    \item Collapse of Global Init: SpanNorm with standard Global Initialization (Red lines in bottom rows) exhibits catastrophic gradient decay, with norms dropping below $10^{-12}$, validating our theoretical analysis in Section~\ref{sec:depth_scaling}.
    \item Stability of Scale Init: When paired with our proposed Scale Init (Blue lines in bottom rows), SpanNorm effectively maintains healthy gradient magnitudes ($\approx 10^{-1}$) across all depths, matching the stability profile of the PreNorm baseline.
\end{itemize}

\begin{figure}[]
  \centering
  \includegraphics[height=0.9\textheight]{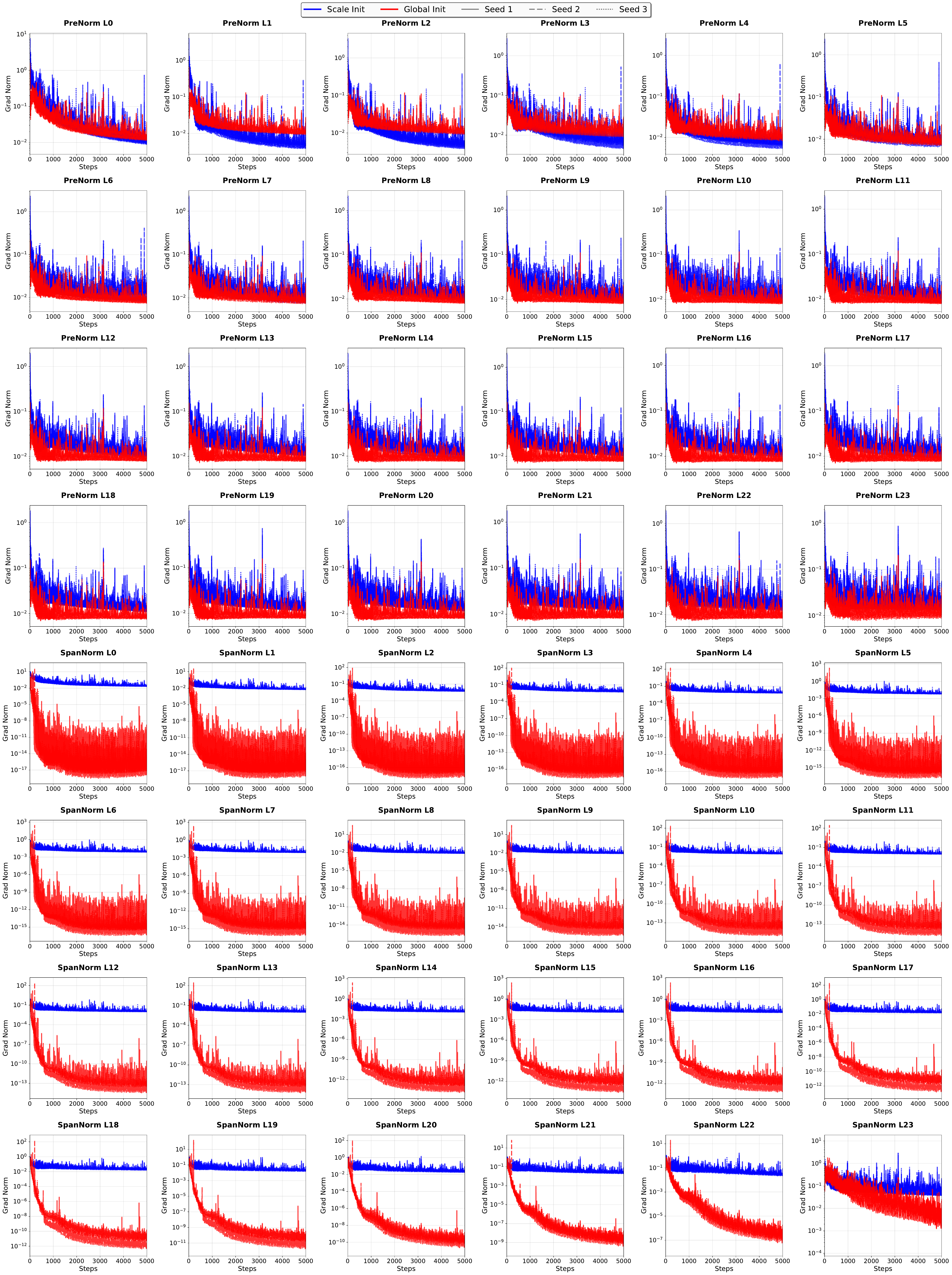}
  \vspace{-0.2cm}
    \caption{Comprehensive layer-wise gradient norm dynamics comparison (PreNorm vs. SpanNorm), which extends the original analysis of Figure~\ref{fig:gradient_norm} by visualizing gradient norms across all 24 layers for the first 5,000 training steps. We compare PreNorm (top rows) and SpanNorm (bottom rows) under two initialization strategies: the standard Global Init (Red) and our proposed Scale Init (Blue). The plots show the trend across 3 random seeds, with faint lines representing individual runs to demonstrate reproducibility. (1) Collapse without Scaling: The bottom rows reveal that SpanNorm with Global Init suffers from catastrophic gradient decay, where norms drop below $10^{-12}$, confirming the theoretical prediction of training collapse. (2) Effectiveness of Scale Init: Scale Init effectively stabilizes SpanNorm, maintaining healthy gradient norms ($\approx 10^{-1}$) comparable to the PreNorm baseline. (3) Baseline Comparison: PreNorm remains robust to initialization changes, but SpanNorm achieves similar stability only when paired with Scale Init.}
  \label{fig:grad_seed}
\end{figure}

\end{document}